\newcommand*{\COLT}{}
\newcommand*{\CAMREADY}{}
	\newtheorem{claim}[theorem]{Claim}
	\newtheorem{fact}[theorem]{Fact}
	\newcommand{\qed}{\hfill\ensuremath{\blacksquare}}
	\newtheorem{definition}{Definition}
	\newtheorem{corollary}{Corollary}
	\newtheorem{theorem}{Theorem}
	\newtheorem{proposition}{Proposition}
	\newtheorem{claim}{Claim}
\def\be{\begin{equation}}
\def\ee{\end{equation}}
\def\beas{\begin{eqnarray*}}
\def\eeas{\end{eqnarray*}}
\def\bea{\begin{eqnarray}}
\def\eea{\end{eqnarray}}
\newcommand{\h}{{\mathbf h}}
\newcommand{\x}{{\mathbf x}}
\newcommand{\vv}{{\mathbf v}}
\newcommand{\e}{{\mathbf e}}
\newcommand{\aaa}{{\mathbf a}}
\newcommand{\oo}{{\mathbf o}}
\newcommand{\1}{{\mathbf 1}}
\newcommand{\0}{{\mathbf 0}}
\newcommand{\A}{{\mathcal A}}
\newcommand{\B}{{\mathcal B}}
\newcommand{\M}{{\mathcal M}}
\newcommand{\NN}{{\mathcal N}}
\newcommand{\OO}{{\mathcal O}}
\newcommand{\I}{{\mathcal I}}
\newcommand{\R}{{\mathbb R}}
\newcommand{\N}{{\mathbb N}}
\newcommand{\alphabf}{{\boldsymbol{\alpha}}}
\newcommand{\rI}{\text{I}}
\newcommand{\rII}{\text{II}}
\newcommand{\abs}[1]{\left\lvert#1 \right\rvert}
\newcommand{\inprod}[2]  {\left\langle{#1},{#2}\right\rangle}
\newcommand{\mat}[1]{\llbracket#1\rrbracket}
\newcommand{\matflex}[1]{\left\llbracket#1\right\rrbracket}
\newcommand{\otimesg}{\otimes_g}
\newcommand{\Tbar}{\bar{T}}
\newcommand{\phibar}{\bar{\phi}}
\newcommand{\nubar}{\bar{\nu}}
\newcommand{\abar}{\bar{a}}
\newcommand{\aaabar}{\bar{\aaa}}
\newcommand{\NNbar}{\bar{\NN}}
\newcommand{\Sbar}{\bar{S}}
\newcommand{\Qbar}{\bar{Q}}
\newcommand{\Ibar}{\bar{I}}
\newcommand{\Abar}{\bar{A}}
\newcommand{\reduce}[2]{#1|_{#2}}
\newcommand{\deepsimnets}{cohen2016deep}
\newcommand{\cactd}{cohen2016expressive}
\newcommand{\crngtd}{cohen2016convolutional}
\newcommand{\tmm}{sharir2016tensorial}
\newcommand{\inductive}{cohen2017inductive}
\definecolor{xcolor-gray}{gray}{0.95}
	\newcommand*{\ABBR}{}
	\newcommand*{\ABBR}{}
	\newcommand*{\ABBR}{}
	\newcommand*{\ABBR}{}
	\newcommand{\eg}{\emph{e.g.}}
	\newcommand{\ie}{\emph{i.e.}}
	\newcommand{\cf}{\emph{cf.}}
	\newcommand{\wrt}{w.r.t.}
\begin{document}

% TITLE AND AUTHORS
\ifdefined\NIPS
	\title{Boosting Dilated Convolutional Networks with Mixed Tensor Decompositions}
	\author{
	Nadav Cohen \\
	Institute for Advanced Study \\
	\texttt{cohennadav@ias.edu} \\
	\And 
	Ronen Tamari \\
	The Hebrew University of Jerusalem \\
	\texttt{ronent@cs.huji.ac.il} \\
	\And 
	Amnon Shashua \\
	The Hebrew University of Jerusalem \\
	\texttt{shashua@cs.huji.ac.il} \\
	}
	\maketitle
\fi
\ifdefined\CVPR
	\title{Boosting Dilated Convolutional Networks with Mixed Tensor Decompositions}
	\author{
	Nadav Cohen \\
	Institute for Advanced Study \\	
	\texttt{cohennadav@ias.edu} \\
	\and
	Ronen Tamari \\
	The Hebrew University of Jerusalem \\
	\texttt{ronent@cs.huji.ac.il} \\	
	\and
	Amnon Shashua \\
	The Hebrew University of Jerusalem \\
	\texttt{shashua@cs.huji.ac.il} \\
	}
	\maketitle
\fi
\ifdefined\AISTATS
	\twocolumn[
	\aistatstitle{Boosting Dilated Convolutional Networks with Mixed Tensor Decompositions}
	\ifdefined\CAMREADY
		\aistatsauthor{Nadav Cohen \And Ronen Tamari \And Amnon Shashua}
		\aistatsaddress{Institute for Advanced Study \And The Hebrew University of Jerusalem \And The Hebrew University of Jerusalem}
	\else
		\aistatsauthor{Anonymous Author 1 \And Anonymous Author 2 \And Anonymous Author 3}
		\aistatsaddress{Unknown Institution 1 \And Unknown Institution 2 \And Unknown Institution 3}
	\fi
	]	
\fi
\ifdefined\ICML
	\icmltitlerunning{Boosting Dilated Convolutional Networks with Mixed Tensor Decompositions}
	\twocolumn[
	\icmltitle{Boosting Dilated Convolutional Networks with Mixed Tensor Decompositions}
	\icmlauthor{Nadav Cohen}{cohennadav@ias.edu}
	\icmladdress{Institute for Advanced Study}
	\icmlauthor{Ronen Tamari}{ronent@cs.huji.ac.il}
	\icmladdress{The Hebrew University of Jerusalem}
	\icmlauthor{Amnon Shashua}{shashua@cs.huji.ac.il}
	\icmladdress{The Hebrew University of Jerusalem}
	\icmlkeywords{Deep Learning, Dilated Convolutions, Tensor Decompositions, Expressiveness}
	\vskip 0.3in
	]
\fi
\ifdefined\ICLR
	\title{Boosting Dilated Convolutional Networks with Mixed Tensor Decompositions}
	\author{Nadav Cohen \& Ronen Tamari \& Amnon Shashua\\ 
	\texttt{cohennadav@ias.edu \& \{ronent,shashua\}@cs.huji.ac.il}}
	\maketitle
\fi
\ifdefined\COLT
	\title[Boosting Dilated Convolutional Networks with Mixed Tensor Decompositions]{Boosting Dilated Convolutional Networks with \\ Mixed Tensor Decompositions}
	\coltauthor
	{\Name{Nadav Cohen} \Email{cohennadav@ias.edu}\\
	\addr Institute for Advanced Study \vspace{2mm}\\
	\Name{Ronen Tamari} \Email{ronent@cs.huji.ac.il}\\
	\addr The Hebrew University of Jerusalem \vspace{2mm}\\
	\Name{Amnon Shashua} \Email{shashua@cs.huji.ac.il}\\
	\addr The Hebrew University of Jerusalem}
	\maketitle
\fi

% ABSTRACT
\vspace{-4mm}
\begin{abstract}

The driving force behind deep networks is their ability to compactly represent rich classes of functions.
The primary notion for formally reasoning about this phenomenon is expressive efficiency, which refers to a situation where one network must grow unfeasibly large in order to realize (or approximate) functions of another.
To date, expressive efficiency analyses focused on the architectural feature of depth, showing that deep networks are representationally superior to shallow ones.
In this paper we study the expressive efficiency brought forth by connectivity, motivated by the observation that modern networks interconnect their layers in elaborate ways.
We focus on dilated convolutional networks, a family of deep models delivering state of the art performance in sequence processing tasks.
By introducing and analyzing the concept of mixed tensor decompositions, we prove that interconnecting dilated convolutional networks can lead to expressive efficiency.
In particular, we show that even a single connection between intermediate layers can already lead to an almost quadratic gap, which in large-scale settings typically makes the difference between a model that is practical and one that is not.
Empirical evaluation demonstrates how the expressive efficiency of connectivity, similarly to that of depth, translates into gains in accuracy.
This leads us to believe that expressive efficiency may serve a key role in the development of new tools for deep network design.

\end{abstract}

% KEYWORDS
\ifdefined\COLT
	\medskip
	\begin{keywords}
	\emph{Deep Learning}, \emph{Expressive Efficiency}, \emph{Dilated Convolutions}, \emph{Tensor Decompositions}
	\end{keywords}
\fi

% INTRODUCTION
\section{Introduction} \label{sec:intro}

One of the key attributes fueling the success of deep learning is the ability of deep networks to compactly represent rich classes of functions.
This phenomenon has drawn considerable attention from the theoretical machine learning community in recent years.
The primary notion for formally reasoning about the representational abilities of different models is \emph{expressive efficiency}.
Given two network architectures~$A$ and~$B$, with size parameters (typically the width of layers across a network) $r_A$~and~$r_B$, we say that architecture~$A$ is expressively efficient \wrt~architecture~$B$ if the following two conditions hold: 
\emph{(i)}~any function realized by~$B$ with size~$r_B$ can be realized (or approximated) by~$A$ with size $r_A\in\OO(r_B)$;
and \emph{(ii)}~there exist functions realized by~$A$ with size~$r_A$ that cannot be realized (or approximated) by~$B$ unless its size meets $r_B\in\Omega(f(r_A))$ for some super-linear function~$f$.
The nature of the function~$f$ in condition~\emph{(ii)} determines the type of efficiency taking place~--~if~$f$ is exponential then architecture~$A$ is said to be exponentially expressively efficient \wrt~architecture~$B$, and if~$f$ is polynomial so is the expressive efficiency of~$A$ over~$B$.

To date, works studying expressive efficiency in the context of deep learning (e.g.~\cite{bengio2011shallow,pascanu2013number,montufar2014number,telgarsky2015representation,eldan2015power,poole2016exponential,raghu2016expressive,\cactd,\crngtd,poggio2015theory,mhaskar2016learning}) have focused on the architectural feature of depth, showing instances where deep networks are expressively efficient \wrt~shallow ones.
This theoretical focus is motivated by the vast empirical evidence supporting the importance of depth (see \cite{LeCun:2015dt} for a survey of such results).
However, it largely overlooks an additional architectural feature that in recent years is proving to have great impact on the performance of deep networks~--~\emph{connectivity}.
Nearly all state of the art networks these days (e.g.~\cite{Szegedy:2014tb,he2015deep,huang2016deep,huang2016densely}) deviate from the simple feed-forward (chain) approach, running layers connected under various schemes.
Whether or not this relates to expressive efficiency remains to be an open question.

A specific family of deep networks gaining increased attention in the deep learning community is that of \emph{dilated convolutional networks}.
These models form the basis of the recent WaveNet~(\cite{van2016wavenet}) and ByteNet~(\cite{kalchbrenner2016neural}) architectures, which provide state of the art performance in audio and text processing tasks.
Dilated convolutional networks are typically applied to sequence data, and consist of multiple succeeding convolutional layers, each comprising non-contiguous filters with a different dilation (distance between neighboring elements). 
The choice of dilations directly affects the space of functions that may be realized by a network, and while no choice is expressively efficient \wrt~another, we show in this work that interconnecting networks with different dilations leads to expressive efficiency, and by this demonstrate that connectivity indeed bears the potential to enhance the expressiveness of deep networks.

Our analysis follows several recent works utilizing tensor decompositions for theoretical studies of deep learning (see for example~\cite{Janzamin:2015uz,sedghi2016training}), and in particular, builds on the equivalence between hierarchical tensor decompositions and convolutional networks established in~\cite{\cactd} and~\cite{\crngtd}.
We show that with dilated convolutional networks, the choice of dilations throughout a network corresponds to determination of the mode (dimension) tree underlying the respective decomposition.
We then define the notion of a \emph{mixed tensor decomposition}, which blends together multiple mode trees, effectively creating a large ensemble of hybrid trees formed from all possible combinations.
Mixed tensor decompositions correspond to mixed dilated convolutional networks, \ie~mixtures formed by connecting intermediate layers of different dilated convolutional networks.
This allows studying the expressive properties of such mixtures using mathematical machinery from the field of tensor analysis.
We fully analyze a particular case of dilated convolutional arithmetic circuits, showing that a single connection between intermediate layers already leads to an almost quadratic expressive efficiency, which in large-scale settings typically makes the difference between a model that is practical and one that is not.

An experiment on TIMIT speech corpus~(\cite{garofolo1993darpa}) evaluates the dilated convolutional network architectures covered by our analysis.
We find that interconnecting intermediate layers of different networks improves accuracy, with no additional cost in terms of computation or model capacity.
This serves as an indication that with the architectural feature of connectivity, similarly to the case of depth, expressive efficiency and improved accuracies go hand in hand.
Accordingly, we believe expressive efficiency may serve a key role in the development of new tools for deep network design.

% SUMMARY OF OUR ANALYSIS AND CONTRIBUTIONS
\vspace{-1mm}
\section{Summary of Our Analysis and Contributions} \label{sec:summary}
\vspace{-1mm}

Our analysis begins in sec.~\ref{sec:dcn}, where we present the dilated convolutional network underlying WaveNet (fig.~\ref{fig:base_dcn}).
We consider this to be the baseline architecture and, following~\cite{\crngtd}, facilitate its study through tensor analysis.
The key to introducing tensors into the framework is a discretization of the network's input-output mapping.
Namely, $f(\x[t{-}N{+}1],\ldots,\x[t])$~--~a function realized by the network ($t$~here stands for a natural time index), is conceptually evaluated on a finite (exponentially large) number of input points, generated from all possible assignments of the variables~$\x[t{-}N{+}1],\ldots,\x[t]$ to each hold one of~$M$ predetermined values.
This gives rise to an $N$-dimensional lookup table, with length~$M$ in each axis.
We refer to this lookup table as a \emph{grid tensor} (eq.~\ref{eq:grid_tensor}).
It is shown (app.~\ref{app:base_decomp}) that grid tensors brought forth by the baseline dilated convolutional network can be expressed as a hierarchical tensor decomposition, referred to as the \emph{baseline decomposition} (eq.~\ref{eq:base_decomp}).

The baseline decomposition implicitly adheres to a particular tree over tensor modes (axes).
This calls for a generalization, and we indeed define a general mode tree (def.~\ref{def:tree}), followed by a corresponding hierarchical tensor decomposition, referred to as the \emph{tree decomposition}  (eq.~\ref{eq:tree_decomp}).
Different choices of mode trees lead to tree decompositions characterizing networks with different dilations.
We focus on the tree that corresponds to the baseline network (fig.~\ref{fig:dilations_trees}(a)), and on those corresponding to networks obtained by swapping dilations of different layers (fig.~\ref{fig:dilations_trees}(b), for example).

Armed with a framework for representing different dilated convolutional networks through hierarchical tensor decompositions of different mode trees, we head on in sec.~\ref{sec:mtd} and introduce the notion of a \emph{mixed tensor decomposition} (eq.~\ref{eq:mix_decomp}).
The mixed decomposition of two mode trees~$T$ and~$\Tbar$ is based on a preselected set of nodes present in both trees, referred to as \emph{mixture nodes}.
Individual tree decompositions of~$T$ and~$\Tbar$ are run in parallel, where at each mixture node, tensors from the two decompositions are swapped.
If~$\NN$ and~$\NNbar$ are the dilated convolutional networks characterized by~$T$ and~$\Tbar$ (respectively), the mixed decomposition characterizes a mixed (interconnected) network~$\M$, formed by rewiring intermediate layers of~$\NN$ into~$\NNbar$, and vice versa (see fig.~\ref{fig:mix_trees_dcn}).

The heart of our analysis is sec.~\ref{sec:analysis}, where we study the expressive efficiency of the mixed network~$\M$ over the individual networks~$\NN$ and~$\NNbar$.
Establishing expressive efficiency requires showing that any function realized by~$\NN$ or~$\NNbar$ can be realized by~$\M$ with no more than linear growth in size, whereas the converse does not hold, \ie~there exist functions realizable by~$\M$ that cannot be realized by~$\NN$ or~$\NNbar$ unless their size is allowed to grow super-linearly.
From a tensor decomposition perspective, this translates to the following two propositions:
\vspace{-1.5mm}
\begin{enumerate}[label=\emph{(\roman*)}]
\item any tensor generated by a tree decomposition of~$T$ or~$\Tbar$ can be realized by their mixed decomposition with no more than linear growth in size; and
\vspace{-1.5mm}
\item there exist tensors realizable by the mixed decomposition of~$T$ and~$\Tbar$ that cannot be realized by their individual tree decompositions without a super-linear growth in size.
\end{enumerate}
\vspace{-1.5mm}
We address both propositions through the notion of \emph{hybrid mode trees} (def.~\ref{def:hybrid_tree}; fig.~\ref{fig:hybrid_trees}), which are simply mode trees born from combinations of~$T$ and~$\Tbar$.
We prove (claim~\ref{claim:hybrid_tree_by_mix}) that the mixed decomposition of~$T$ and~$\Tbar$ can replicate, with no more than linear growth in size, the tree decomposition of any hybrid tree~$H$.
Since~$T$ and~$\Tbar$ are in particular hybrid mode trees of themselves, we obtain an affirmative answer to proposition~\emph{(i)}.
For addressing proposition~\emph{(ii)}, we demonstrate a case (with convolutional arithmetic circuits) where there exists a hybrid tree~$H$ whose tree decomposition generates tensors that require the tree decompositions of~$T$ and~$\Tbar$ to grow super-linearly.
Since the mixed decomposition of~$T$ and~$\Tbar$ can (by claim~\ref{claim:hybrid_tree_by_mix}) replicate the tree decomposition of~$H$ with no more than linear growth, proposition~\emph{(ii)} is established, and~$\M$ is indeed expressively efficient \wrt~$\NN$ and~$\NNbar$ (corollary~\ref{corollary:mix_by_tree}).

\medskip

The central tool for establishing proposition~\emph{(ii)}, or more specifically, for demonstrating the existence of a hybrid tree~$H$ whose tree decomposition requires those of~$T$ and~$\Tbar$ to grow super-linearly, is a tight analysis of tensors generated by a tree decomposition in terms of their ranks when arranged as matrices (theorem~\ref{theorem:tree_decomp_ranks}).
Matricization ranks under hierarchical tensor decompositions are of interest from a pure tensor analysis perspective (\cf~\cite{Hackbusch-book}), as well as in the context of deep learning (\cf~\cite{\inductive}).
The bounds we provide are much tighter (exact in many cases) and far more general than those existing in the literature, and we expect them to prove useful in different applications.
The key idea in deriving these bounds is to consider a matricized form of the tree decomposition, and recursively propagate outwards various matrices (for details see proof sketch following theorem~\ref{theorem:tree_decomp_ranks}, as well as the complete proof in app.~\ref{app:proofs:tree_decomp_ranks}).

\medskip

To conclude this section, we list below the main contributions of the paper:
\begin{itemize}
\vspace{-1mm}
\item
We introduce the notion of a mixed tensor decomposition, and prove that it brings forth a representational advantage compared to the individual hierarchical decompositions it comprises.
This development is of interest from a pure tensor analysis perspective, independently of convolutional networks, or machine learning in general.
\vspace{-1mm}
\item
We provide the first formal evidence for the fact that interconnectivity~--~an architectural feature prevalent in state of the art deep learning, brings forth expressive efficiency.
\vspace{-1mm}
\item
Our central theorem (theorem~\ref{theorem:tree_decomp_ranks}) provides the most comprehensive characterization to date of matricization ranks brought forth by hierarchical tensor decompositions.
\end{itemize}

% PRELIMINARIES
\section{Preliminaries} \label{sec:prelim}

The constructions and analyses delivered in this paper rely on concepts from the field of tensor analysis.
Below we provide the minimal background required in order to follow our arguments.\endnote{
The viewpoint we adopt is actually a concrete special case of a more abstract algebraic viewpoint of tensor analysis, as presented for example in~\cite{Hackbusch-book}.
We limit ourselves to this concrete viewpoint since it suffices for our needs and is easier to grasp.
}

The core concept in tensor analysis is a \emph{tensor}, which for our purposes may simply be thought of as a multi-dimensional array.
The \emph{order} of a tensor is defined to be the number of indexing entries in the array, which are referred to as \emph{modes}.
The \emph{dimension} of a tensor in a particular mode is defined as the number of values that may be taken by the index in that mode.
For example, a $4$-by-$3$ matrix is a tensor of order~$2$, \ie~it has two modes, with dimension~$4$ in mode~$1$ and dimension~$3$ in mode~$2$.
If $\A$ is a tensor of order $N$ and dimension $M_i$ in each mode $i\in\{1,\ldots,N\}$, the space of all configurations it can take is denoted, quite naturally, by $\R^{M_1{\times\cdots\times}M_N}$.

A fundamental operator in tensor analysis is the \emph{tensor product} (also known as \emph{outer product}), which we denote by $\otimes$.
It is an operator that intakes two tensors $\A\in\R^{M_1{\times\cdots\times}M_P}$ and $\B\in\R^{M_{P+1}{\times\cdots\times}M_{P+Q}}$ (orders $P$ and $Q$ respectively), and returns a tensor $\A\otimes\B\in\R^{M_1{\times\cdots\times}M_{P+Q}}$ (order $P+Q$) defined by: $(\A\otimes\B)_{d_1{\ldots}d_{P+Q}}=\A_{d_1{\ldots}d_P}\cdot\B_{d_{P+1}{\ldots}d_{P+Q}}$.
In~\cite{\crngtd} a generalization of the tensor product is defined, by replacing multiplication with a general operator~$g(\cdot)$.
Specifically, for a function $g:\R\times\R\to\R$ that is commutative ($g(a,b)=g(b,a)$ for all $a,b\in\R$), the \emph{generalized tensor product}, denoted~$\otimesg$, is defined to be the operator that for input tensors $\A\in\R^{M_1{\times\cdots\times}M_P}$ and $\B\in\R^{M_{P+1}{\times\cdots\times}M_{P+Q}}$ (orders $P$ and $Q$ respectively), returns the tensor $\A\otimesg\B\in\R^{M_1{\times\cdots\times}M_{P+Q}}$ (order $P+Q$) given by: $(\A\otimesg\B)_{d_1{\ldots}d_{P+Q}}=g(\A_{d_1{\ldots}d_P},\B_{d_{P+1}{\ldots}d_{P+Q}})$.

An additional operator we will make use of is \emph{mode permutation}.
Let~$\A$ be a tensor of order~$N$, and let~$\sigma(\cdot)$ be a permutation over~$N$ (bijective mapping from $\{1,\ldots,N\}$ to itself).
The mode permutation of~$\A$ \wrt~$\sigma(\cdot)$, which by a slight abuse of notation is denoted~$\sigma(\A)$, is the order-$N$ tensor defined by: $\sigma(\A)_{d_1{\ldots}d_N}=\A_{d_{\sigma(1)}{\ldots}d_{\sigma(N)}}$.
In words, $\sigma(\A)$~is the tensor obtained by rearranging the modes of~$\A$ in accordance with~$\sigma(\cdot)$.

When studying tensors, it is oftentimes useful to arrange them as matrices, a procedure referred to as \emph{matricization}.
Let~$\A$ be a tensor of order~$N$ and dimension~$M_i$ in each mode $i\in\{1,\ldots,N\}$, and let $\I\subset\{1,\ldots,N\}$ be a set of mode indexes, whose complement $\{1,\ldots,N\}\setminus\I$ we denote by~$\I^c$.
We may write $\I=\{i_1,\ldots,i_{\abs{\I}}\}$ where $i_1<\cdots<i_{\abs{\I}}$, and similarly $\I^c=\{j_1,\ldots,j_{\abs{\I^c}}\}$ where $j_1<\cdots<j_{\abs{\I^c}}$.
The matricization of~$\A$ \wrt~$\I$, denoted $\mat{\A}_{\I}$, is the $\prod_{t=1}^{\abs{\I}}M_{i_t}$-by-$\prod_{t=1}^{\abs{\I^c}}M_{j_t}$ matrix holding the entries of~$\A$ such that $\A_{d_1{\ldots}d_N}$ is placed in row index $1+\sum_{t=1}^{\abs{\I}}(d_{i_t}-1)\prod_{t'=t+1}^{\abs{\I}}M_{i_{t'}}$ and column index $1+\sum_{t=1}^{\abs{\I^c}}(d_{j_t}-1)\prod_{t'=t+1}^{\abs{\I^c}}M_{j_{t'}}$.
If~$\I=\emptyset$ or $\I=\{1,\ldots,N\}$, then by definition $\mat{\A}_{\I}$ is a row or column (respectively) vector of dimension $\prod_{t=1}^{N}M_t$ holding $\A_{d_1{\ldots}d_N}$ in entry $1+\sum_{t=1}^{N}(d_t-1)\prod_{t'=t+1}^{N}M_{t'}$.

\medskip

To conclude this section, we hereinafter establish notational conventions that will accompany us throughout the paper.
We denote tensors with uppercase calligraphic letters, \eg~$\A$, and in some cases, with the Greek letters $\phi$, $\varphi$ or~$\psi$.
Subscripts are used to refer to individual tensor entries, \eg~$\A_{d_1{\ldots}d_N}\in\R$, whereas superscripts indicate the location of a tensor in some annotated collection, for example $\A^y$~stands for the $y$'th tensor in the collection $\A^1\ldots\A^r$.
Vectors are typically denoted with boldface lowercase letters, \eg~$\aaa$, where again subscripts refer to an individual entry (\eg~$a_\alpha\in\R$), and superscripts to the identity of a vector within some annotated collection (\eg~$\aaa^{l,j}$ is the $(l,j)$'th vector in the set $\{\aaa^{l,j}\}_{l=1{\ldots}L,j=1{\ldots}N}$).
We use non-boldface lowercase or uppercase letters (\eg~$l$ or~$L$) to denote scalars, and in this case, both subscripts and superscripts distinguish between objects in an annotated set (\eg~$l_i,l^i,L_i,L^i\in\R$).
Finally, for a positive integer $N\in\N$, we use~$[N]$ as shorthand for the set~$\{1,\ldots,N\}$.

% DILATED CONVOLUTIONAL NETWORKS
\vspace{-1mm}
\section{Dilated Convolutional Networks} \label{sec:dcn}
\vspace{-1mm}

\emph{Dilated convolutional networks} are a family of convolutional networks (\cite{lecun1995convolutional}) gaining increased attention in the deep learning community.
As opposed to more conventional convolutional architectures (see for example~\cite{Krizhevsky:2012wl}), which are applied primarily to images and videos, dilated convolutional networks thrive in sequence processing tasks.
For example, they underlie Google's WaveNet (\cite{van2016wavenet}) and ByteNet (\cite{kalchbrenner2016neural}) models, which provide state of the art performance in audio and text processing tasks.

  % BASELINE ARCHITECTURE
\vspace{-1mm}
\subsection{Baseline Architecture} \label{sec:dcn:base}
\vspace{-1mm}

\begin{figure*}
\includegraphics[width=\textwidth]{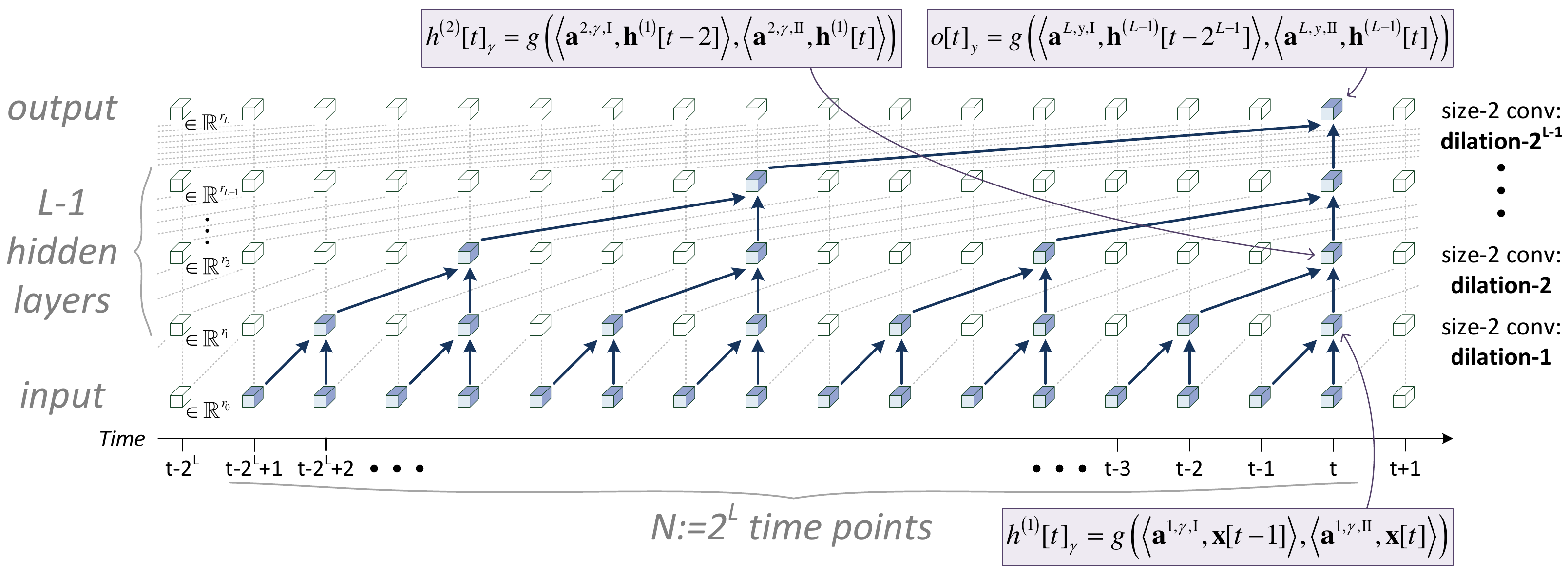}
\vspace{-8mm}
\caption{
Baseline dilated convolutional network architecture (see description in sec.~\ref{sec:dcn:base}).
}
\label{fig:base_dcn}
\vspace{-1mm}
\end{figure*}

The dilated convolutional network architecture considered as baseline in this paper is the one underlying WaveNet model, depicted in fig.~\ref{fig:base_dcn}.
The input to the network is a sequence of vectors $(\x[t])_t\subset\R^{r_0}$, where $t$ is a natural time index.
A size-$2$ convolutional layer with dilation-$1$, \ie~with contiguous filters, maps this input into the hidden sequence $(\h^{(1)}[t])_t\subset\R^{r_1}$.
Specifically, entry~$\gamma\in[r_1]$ of~$\h^{(1)}[t]$ is obtained by applying the filter formed by $\aaa^{1,\gamma,\rI},\aaa^{1,\gamma,\rII}\in\R^{r_0}$ to time points~$t{-}1,t$ of the input: $h^{(1)}[t]_{\gamma}=g(\inprod{\aaa^{1,\gamma,\rI}}{\x[t{-}1]},\inprod{\aaa^{1,\gamma,\rII}}{\x[t]})$.
For reasons that will shortly become apparent, we use $g(\cdot)$ here to denote the binary function combining two size-$1$ convolutions into a single size-$2$ convolution with non-linearity.
Different choices of~$g(\cdot)$ lead to different convolutional operators, for example $g(a,b):=\max\{a+b,0\}$ leads to standard convolution followed by rectified linear activation (\emph{ReLU},~\cite{nair2010rectified}), whereas $g(a,b)=a{\cdot}b$ gives rise to what is known as a \emph{convolutional arithmetic circuit}~(\cite{\cactd}).
Following the first hidden layer, $L{-}1$~size-$2$ convolutional layers with increasing dilations are applied.
Specifically, for $l=2,{\ldots},L{-}1$, hidden layer~$l$ maps the sequence $(\h^{(l{-}1)}[t])_t\subset\R^{r_{l-1}}$ into $(\h^{(l)}[t])_t\subset\R^{r_l}$ using filters with dilation-$2^{l{-}1}$, \ie~with an internal temporal gap of $2^{l{-}1}{-}1$ points: $h^{(l)}[t]_{\gamma}=g(\inprod{\aaa^{l,\gamma,\rI}}{\h^{(l{-}1)}[t{-}2^{l{-}1}]},\inprod{\aaa^{l,\gamma,\rII}}{\h^{(l{-}1)}[t]})$.
The last convolutional layer maps $(\h^{(L{-}1)}[t])_t$ into network output sequence $(\oo[t])_t\subset\R^{r_L}$ using filters with dilation-$2^{L{-}1}$: $o[t]_y=g(\inprod{\aaa^{L,y,\rI}}{\h^{(L{-}1)}[t{-}2^{L{-}1}]},\inprod{\aaa^{L,y,\rII}}{\h^{(L{-}1)}[t]})$.

Altogether, the architectural parameters of the network are the number of convolutional layers~$L$, the convolutional operator~$g(\cdot)$, the input dimension~$r_0$, the number of channels~$r_l$ for each hidden layer $l\in[L{-}1]$, and the output dimension~$r_L$.
The learnable parameters are the convolution weights $\aaa^{l,\gamma,\rI},\aaa^{l,\gamma,\rII}\in\R^{r_{l-1}}$ for channel $\gamma\in[r_l]$ of layer~$l\in[L]$.

\medskip

Our interest lies on the representational abilities of the network, \ie~on the properties of the input-output mappings it can realize.
As illustrated in fig.~\ref{fig:base_dcn}, for some fixed time point~$t$, $\oo[t]$~--~network output at time~$t$, is a function of $\x[t{-}2^L\text{+}1]\ldots\x[t]$~--~network input over the last $2^L$ time points.
Taking into account the temporal stationarity of the network, and denoting for brevity $N{:=}2^L$, we may write $o[t]_y=f_y(\x[t{-}N{+}1],\ldots,\x[t])$ for every $y\in[r_L]$, where the functions~$\{f_y(\cdot)\}_y$ are independent of the time index~$t$.
The latter functions, which obviously depend on the convolution weights $\{\aaa^{l,\gamma,\rI},\aaa^{l,\gamma,\rII}\}_{l,\gamma}$, completely characterize the input-output mapping realized by the network.
We will study these functions through the process of \emph{discretization}.
Namely, $f_y(\cdot)$~--~a function of~$N$ vector-variables, will be represented by a lookup table (tensor) formed by varying each vector-variable over a finite number of possible values.
The size of such a lookup table is exponential in~$N$, thus treating it directly is intractable.
However, as we shall see, the network admits a compact parameterization of lookup tables in terms of the convolution weights $\{\aaa^{l,\gamma,\rI},\aaa^{l,\gamma,\rII}\}_{l,\gamma}$.
This parameterization (eq.~\ref{eq:base_decomp} below) entails an algebraic structure, and will be used to study the representational properties of the baseline dilated convolutional network.

For the discretization of~$f_y(\cdot)$, we choose a collection of vectors $\vv^{(1)}\ldots\vv^{(M)}\in\R^{r_0}$, and define the following tensor~$\A^y$ of order~$N$ and dimension~$M$ in each mode:
\be
\A^y_{d_1{\ldots}d_N}:=f_y(\vv^{(d_1)},\ldots,\vv^{(d_N)})
\quad\forall{d_1{\ldots}d_N\in[M]}
\label{eq:grid_tensor}
\ee
The vectors $\vv^{(1)}\ldots\vv^{(M)}$ are referred to as \emph{discretizers}.
They generate the tensor~$\A^y$ by assigning, in all possible combinations, the~$N$ vector-variables of the function~$f_y(\cdot)$.
We refer to~$\A^y$ as the \emph{grid tensor} of~$f_y(\cdot)$, reflecting the fact that it holds function values over a discrete grid.

The parameterization of~$\{f_y(\cdot)\}_y$ discretizations mentioned above is in fact a hierarchical decomposition of the grid tensors~$\{\A^y\}_y$.
Accordingly, and for the sake of highlighting correspondence to the baseline dilated convolutional network (fig.~\ref{fig:base_dcn}), we refer to this parameterization as the \emph{baseline decomposition}.
For conciseness, we defer the derivation of the baseline decomposition to app.~\ref{app:base_decomp}, and hereby lay out its final form:
\bea
&&\text{For $j=1{\ldots}N$:}
\nonumber\\
&&\quad\underbrace{\phi^{0,j,\gamma}}_{\text{order $1$}} = [v^{(1)}_\gamma,\ldots,v^{(M)}_\gamma]^\top
\quad\forall\gamma\in[r_0]
\nonumber\\[1mm]
&&\text{For $l=1{\ldots}L$~,~$j=1{\ldots}N/2^l$:}
\nonumber\\
&&\quad\underbrace{\phi^{l,j,\gamma}}_{\text{order $2^l$}} = \left(\sum\nolimits_{\alpha=1}^{r_{l-1}} a_\alpha^{l,\gamma,\rI}\cdot\phi^{l-1,2j-1,\alpha}\right) \otimesg \left(\sum\nolimits_{\alpha=1}^{r_{l-1}} a_\alpha^{l,\gamma,\rII}\cdot\phi^{l-1,2j,\alpha}\right)
\quad\forall\gamma\in[r_l]
\nonumber\\[1mm]
&&\A^y=\phi^{L,1,y}
\quad\forall{y\in[r_L]}
\label{eq:base_decomp}
\eea
$a_\alpha^{l,\gamma,\rI}$~and~$a_\alpha^{l,\gamma,\rII}$ here stand for coordinate~$\alpha$ of the convolution weights~$\aaa^{l,\gamma,\rI}$ and~$\aaa^{l,\gamma,\rII}$ respectively, while $v^{(i)}_\gamma$~stands for coordinate~$\gamma$ of the discretizer~$\vv^{(i)}$.
Notice that the tensor products here are generalized (see sec.~\ref{sec:prelim})~--~based on the network's convolutional operator~$g(\cdot)$.
Therefore, strictly speaking, the baseline decomposition is a \emph{generalized tensor decomposition}, as defined in~\cite{\crngtd}.

  % DILATIONS AND MODE TREES
\subsection{Dilations and Mode Trees} \label{sec:dcn:tree}

\begin{figure*}
\includegraphics[width=\textwidth]{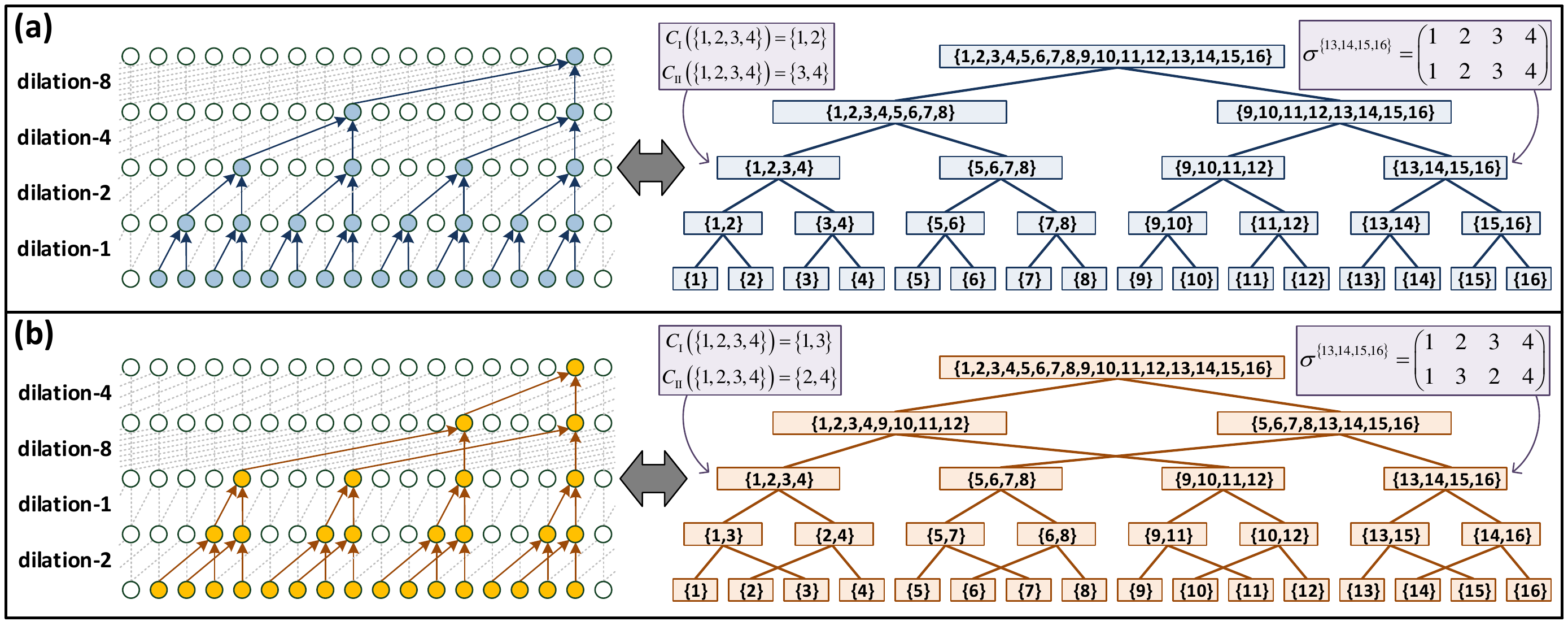}
\vspace{-8mm}
\caption{
Best viewed in color.
Dilated convolutional networks (left) and the mode trees underlying their respective tensor decompositions (right).
\textit{\textbf{(a)}}~Baseline architecture~--~dilation~$2^{l-1}$ in layer~$l$.
\textit{\textbf{(b)}}~Architecture obtained by swapping dilations of even and odd layers.
}
\label{fig:dilations_trees}
\vspace{-5mm}
\end{figure*}

The baseline decomposition (eq.~\ref{eq:base_decomp}), corresponding to the baseline dilated convolutional network (fig.~\ref{fig:base_dcn}), implicitly adheres to a tree structure~--~for every~$(l,j)$, there exists a group of tensors~$\{\phi^{l,j,\gamma}\}_\gamma$, formed through combinations of tensors from its ``child'' groups~$\{\phi^{l-1,2j-1,\gamma}\}_\gamma$ and~$\{\phi^{l-1,2j,\gamma}\}_\gamma$.
In this subsection we generalize the underlying tree structure, and show that the resulting decompositions capture networks with various dilations throughout their convolutional layers.
We begin by defining a general (binary) tree over tensor modes:
\begin{definition}
\label{def:tree}
Let~$N\in\N$.
A \emph{binary mode tree}\endnote{
Binary mode trees lead to decompositions (eq.~\ref{eq:tree_decomp}) that correspond to networks with size-$2$ convolutions.
We limit ourselves to this case for simplicity of presentation.
Our formulation can easily be extended to account for convolutions of arbitrary size by considering mode trees that are not necessarily binary, and by modifying the decomposition in eq.~\ref{eq:tree_decomp} to take (generalized) tensor products between an arbitrary number of tensors (not necessarily two).
}
over~$[N]$ is a full binary tree\endnote{
A full binary tree is a tree in which all interior (non-leaf) nodes have exactly two children.
}
in which:
\begin{itemize}
\vspace{-2mm}
\item Every node is labeled by a subset of~$[N]$
\vspace{-2mm}
\item There are exactly~$N$ leaves, labeled~$\{1\}\ldots\{N\}$
\vspace{-2mm}
\item The label of an interior (non-leaf) node is the union of the labels of its children
\vspace{-2mm}
\end{itemize}
If~$T$ is a binary mode tree, we identify its nodes with their labels, \ie~with the corresponding subsets of~$[N]$.
The set of all interior nodes is denoted by~$int(T)\subset2^{[N]}$, the children of an interior node $\nu\subset[N]$ are denoted by~$C_\rI(\nu;T),C_\rII(\nu;T)\subset[N]$, and the parent of a non-root node $\nu\subset[N]$ is denoted by~$P(\nu;T)$.
Notice that by definition, the root node is labeled~$[N]$.
\end{definition}

Binary mode trees induce hierarchical decompositions of grid tensors.
Recall the definition of grid tensors in sec.~\ref{sec:dcn:base} (eq.~\ref{eq:grid_tensor}), and let~$T$ be a binary mode tree over~$[N]$.
For every node~$\nu\subset[N]$ in~$T$, we define a collection of $2^{\abs{\nu}}$-order tensors~$\{\phi^{\nu,\gamma}\}_{\gamma\in[r]}$, where~$r\in\N$ is a predetermined constant,\endnote{
In general the number of tensors in the collection may vary across nodes, but for simplicity of presentation we assume here that all collections comprise exactly~$r$ tensors.
} referred to as the \emph{size constant} of the decomposition.
We also define, for each interior node~$\nu{\in}int(T)$, two collections of weight vectors~--~$\{\aaa^{\nu,\gamma,\rI}\}_{\gamma\in[r]}\subset\R^r$ and~$\{\aaa^{\nu,\gamma,\rII}\}_{\gamma\in[r]}\subset\R^r$.
The hierarchical grid tensor decomposition induced by~$T$ traverses through the tree in a depth-first fashion, assigning the tensors of node~$\nu$~($\{\phi^{\nu,\gamma}\}_{\gamma}$) through combinations of the tensors of its children ($\{\phi^{C_\rI(\nu;T),\gamma}\}_{\gamma}$ and~$\{\phi^{C_\rII(\nu;T),\gamma}\}_{\gamma}$).
This is laid out formally in eq.~\ref{eq:tree_decomp} below, which we refer to as the \emph{tree decomposition}. 
\bea
&&\text{For $j=1{\ldots}N$:}
\nonumber\\
&&\quad\underbrace{\phi^{\{j\},\gamma}}_{\text{order $1$}} = [v^{(1)}_\gamma,\ldots,v^{(M)}_\gamma]^\top
\quad\forall\gamma\in[r]
\nonumber\\[1mm]
&&\text{For $\nu$ in $int(T)$ (depth-first order):}
\nonumber\\
&&\quad\underbrace{\phi^{\nu,\gamma}}_{\text{order $2^{\abs{\nu}}$}} = \sigma^{(\nu;T)}\left(\left(\sum\nolimits_{\alpha=1}^{r} a_\alpha^{\nu,\gamma,\rI}\cdot\phi^{C_\rI(\nu;T),\alpha}\right) \otimesg \left(\sum\nolimits_{\alpha=1}^{r} a_\alpha^{\nu,\gamma,\rII}\cdot\phi^{C_\rII(\nu;T),\alpha}\right)\right)
\quad\forall\gamma\in[r]
\nonumber\\[1mm]
&&\A^y=\phi^{[N],y}
\quad\forall{y\in[r]}
\label{eq:tree_decomp}
\eea
As in the baseline decomposition (eq.~\ref{eq:base_decomp}), $v^{(i)}_\gamma$ here stands for coordinate~$\gamma$ of the discretizer~$\vv^{(i)}$.
The permutation~$\sigma^{(\nu;T)}(\cdot)$, for an interior node~$\nu{\in}int(T)$, arranges the modes of the tensor~$\phi^{\nu,\gamma}$ such that these comply with a sorted ordering of~$\nu$.
Specifically, if we denote by $i_1<\cdots<i_{\abs{C_\rI(\nu;T)}}$ the elements of $C_\rI(\nu;T)\subset[N]$, and by $j_1<\cdots<j_{\abs{C_\rII(\nu;T)}}$ the elements of $C_\rII(\nu;T)\subset[N]$, the permutation $\sigma^{(\nu;T)}:[2^{\abs{\nu}}]\to[2^{\abs{\nu}}]$ is the one that sorts the tuple $(i_1,\ldots,i_{\abs{C_\rI(\nu;T)}},j_1,\ldots,j_{\abs{C_\rII(\nu;T)}})$ in ascending order.
The final outcome of the decomposition, \ie~the generated grid tensors~$\{\A^y\}_y$, are the tensors~$\{\phi^{[N],\gamma}\}_{\gamma}$ corresponding to the root of~$T$.

\medskip

Compare the general tree decomposition in eq.~\ref{eq:tree_decomp} to the baseline decomposition in eq.~\ref{eq:base_decomp}.
It is not difficult to see that the latter is a special case of the former.
Namely, it corresponds to a binary mode tree~$T$ that is perfect (all leaves have the same depth~$L=\log_{2}N$), and whose depth-$l$ nodes ($l\in\{0,1,\ldots,L\}$) are $(k-1)N/2^l+[N/2^l]$ for $k\in[2^l]$.\endnote{
If~$c$ is a scalar and~$S$ is a set, $c+S$~stands for the set obtained by adding~$c$ to each element in~$S$.
}
This implies that such a mode tree, when plugged into the tree decomposition (eq.~\ref{eq:tree_decomp}), provides a characterization of the baseline dilated convolutional network (fig.~\ref{fig:base_dcn}), \ie~a network whose dilation in layer~$l$ is~$2^{l-1}$ (see illustration in fig.~\ref{fig:dilations_trees}(a)).
If we were to choose a different mode tree, the corresponding dilated convolutional network would change.\endnote{
It is important to stress that not all choices of mode trees lead to networks resembling ones used in practice.
For example, if different leaves in a tree have different depths, different inputs in  the corresponding network pass through a different number of layers.
Conversely, not every type of dilated convolutional network used in practice corresponds to a mode tree~--~only ones in which an input is connected to the output through a single path.
}
For example, assume that $L=\log_{2}N$~is even, and consider a perfect binary mode tree~$T$ whose depth-$l$ nodes ($l\in\{0,1,\ldots,L\}$) are as follows:
\begin{itemize}
\vspace{-1mm}
\item Even~$l$: depth-$l$ nodes are $(k-1)N/2^l+[N/2^l]$ for $k\in[2^l]$
\vspace{-1mm}
\item Odd~$l$: depth-$l$ nodes are generated by splitting nodes of depth~$l\text{-}1$, such that the first and third quadrants of a split node belong to one child, while the second and fourth belong to the other
\vspace{-1mm}
\end{itemize}
In this case, the network characterized by the tree decomposition (eq.~\ref{eq:tree_decomp}) is obtained by swapping dilations of even and odd layers in the baseline architecture, \ie~it has dilation in layer~$l$ of~$2^{l-2}$ if $l$~is even, and~$2^l$ if $l$~is odd (see illustration in fig.~\ref{fig:dilations_trees}(b)).

% MIXED TENSOR DECOMPOSITIONS
\section{Mixed Tensor Decompositions} \label{sec:mtd}

Let~$T$ and~$\Tbar$ be two binary mode trees over~$[N]$ (def.~\ref{def:tree}).
Consider the tree decomposition of grid tensors induced by~$T$ (eq.~\ref{eq:tree_decomp}).
This decomposition iteratively assigns a group of tensors~$\{\phi^{\nu,\gamma}\}_\gamma$ for each node~$\nu$ in~$T$, based on weight vectors~$\{\aaa^{\nu,\gamma,\rI},\aaa^{\nu,\gamma,\rII}\}_\gamma$ defined for each interior node~$\nu{\in}int(T)$.
The tree decomposition induced by~$\Tbar$ operates similarly, but for distinction we use~$\{\phibar^{\nubar,\gamma}\}_\gamma$ to denote the tensor group of node~$\nubar\in\Tbar$, and~$\{\aaabar^{\nubar,\gamma,\rI},\aaabar^{\nubar,\gamma,\rII}\}_\gamma$ to denote the weights of interior node~$\nubar{\in}int(\Tbar)$.
We will define a \emph{mixed tensor decomposition}, blending together the tree decompositions of~$T$ and~$\Tbar$.
The latter is obtained by choosing a collection of \emph{mixture nodes}~--~$mix(T,\Tbar){\subset}int(T){\cap}int(\Tbar)$.
These are nodes (subsets of~$[N]$) that reside in the interior of both~$T$ and~$\Tbar$, defining locations in the tree decompositions at which tensors will be exchanged.
If~$mix(T,\Tbar)$ is chosen as the empty set, the mixed decomposition simply sums the output tensors generated by the tree decompositions of~$T$ and~$\Tbar$ ($\{\phi^{[N],y}\}_y$~and~$\{\phibar^{[N],y}\}_y$ respectively).
Otherwise, the tree decompositions of~$T$ and~$\Tbar$ progress in parallel, until reaching a mixture node~$\mu{\in}mix(T,\Tbar)$, where they exchange half the tensors corresponding to that node (half of~$\{\phi^{\mu,\gamma}\}_\gamma$ is exchanged for half of~$\{\phibar^{\mu,\gamma}\}_\gamma$).
The process continues until all mixture nodes are visited and the root node (of both trees)~$[N]$ is reached.
At this point tensors ($\{\phi^{[N],y}\}_y$~and~$\{\phibar^{[N],y}\}_y$) are summed and returned as output.

The formal definition of the mixed decomposition is as follows:
\bea
&&1:~\text{For $j=1{\ldots}N$:}
\nonumber\\
&&2:~\quad\phi^{\{j\},\gamma} = \bar{\phi}^{\{j\},\gamma} = [v^{(1)}_\gamma,\ldots,v^{(M)}_\gamma]^\top
\quad\forall\gamma\in[r]
\quad\qquad\fcolorbox{black}{xcolor-gray}{$r$~--~decomposition size constant}
\nonumber\\[1mm]
&&3:~\text{For $\mu$ in $mix(T,\Tbar)\cup\{[N]\}$ (inclusion order):}
\nonumber\\[2mm]
&&4:~\quad\text{For $\nu$ in $int(T)\cap2^\mu\setminus\{\text{nodes in $T$ already visited}\}$ (inclusion order):}
\nonumber\\
&&5:~\quad\quad\phi^{\nu,\gamma} = \sigma^{(\nu;T)}\left(\left(\sum\nolimits_{\alpha=1}^{r} a_\alpha^{\nu,\gamma,\rI}\cdot\phi^{C_\rI(\nu;T),\alpha}\right) \otimesg \left(\sum\nolimits_{\alpha=1}^{r} a_\alpha^{\nu,\gamma,\rII}\cdot\phi^{C_\rII(\nu;T),\alpha}\right)\right)
~\forall\gamma\in[r]
\nonumber\\[2mm]
&&6:~\quad\text{For $\nubar$ in $int(\Tbar)\cap2^\mu\setminus\{\text{nodes in $\Tbar$ already visited}\}$ (inclusion order):}
\nonumber\\
&&7:~\quad\quad\bar{\phi}^{\nubar,\gamma} = \sigma^{(\nubar;\Tbar)}\left(\left(\sum\nolimits_{\alpha=1}^{r} \abar_\alpha^{\nubar,\gamma,\rI}\cdot\bar{\phi}^{C_\rI(\nubar;\Tbar),\alpha}\right) \otimesg \left(\sum\nolimits_{\alpha=1}^{r} \abar_\alpha^{\nubar,\gamma,\rII}\cdot\bar{\phi}^{C_\rII(\nubar;\Tbar),\alpha}\right)\right)
~\forall\gamma\in[r]
\nonumber\\[2mm]
&&8:~\quad\text{Swap~~$\phi^{\mu,\gamma}\longleftrightarrow\bar{\phi}^{\mu,\gamma}\quad\forall\gamma\in[r/2]$}
\nonumber\\[2mm]
&&9:~\A^y = \phi^{[N],y}+\bar{\phi}^{[N],y}
\quad\forall{y\in[r]}
\label{eq:mix_decomp}
\eea
As in the basic tree decomposition (eq.~\ref{eq:tree_decomp}), the first step here (lines~$1$-$2$) is to assign tensors corresponding to the leaf nodes ($\{1\}\ldots\{N\}$) via discretizers~$\vv^{(1)}\ldots\vv^{(M)}$.
The outer loop in line~$3$ traverses~$\mu$ through mixture nodes and the root node in inclusion order, \ie~such that a node (subset of~$[N]$) is always reached after all nodes strictly contained in it.
Lines~$4$-$5$~(respectively $6$-$7$) are the same as in the tree decomposition (eq.~\ref{eq:tree_decomp}), except that instead of running through the entire interior of~$T$ (respectively~$\Tbar$), they cover a segment of it.
This segment continues where the previous ones left off, and comprises only nodes (subsets of~$[N]$) contained in~$\mu$ (including~$\mu$ itself).
Line~$8$ is where the mixing takes place~--~here half the tensors corresponding to node~$\mu$ in the decomposition of~$T$~($\{\phi^{\mu,\gamma}\}_\gamma$), are exchanged for half the tensors corresponding to~$\mu$ in the decomposition of~$\Tbar$~($\{\phibar^{\mu,\gamma}\}_\gamma$).
Finally, after~$\mu$ has reached the root node~$[N]$ and the decompositions of~$T$ and~$\Tbar$ have concluded, line~$9$ sums the output tensors of these decompositions ($\{\phi^{[N],y}\}_y$~and~$\{\phibar^{[N],y}\}_y$ respectively), producing the grid tensors~$\{\A^y\}_y$.

\medskip

In terms of computation and memory, the requirements posed by the mixed decomposition (eq.~\ref{eq:mix_decomp}) are virtually identical to those of running two separate tree decompositions (eq.~\ref{eq:tree_decomp}) with~$T$ and~$\Tbar$.
Specifically, if the tree decompositions of~$T$ and~$\Tbar$ correspond to input-output mappings computed by the dilated convolutional networks~$\NN$ and~$\bar{\NN}$ (respectively), the mixed decomposition would correspond to the computation of a \emph{mixed dilated convolutional network}, formed by summing the outputs of~$\NN$ and~$\bar{\NN}$, and interconnecting their intermediate layers.
The choice of mixture nodes~$mix(T,\Tbar)$ in the mixed decomposition determines the locations at which networks~$\NN$ and~$\bar{\NN}$ are interconnected, where an interconnection simply wires into~$\NN$ half the outputs of a convolutional layer in~$\bar{\NN}$, and vice versa.
For example, suppose that~$\NN$ is the baseline dilated convolutional network (dilation~$2^{l-1}$ in layer~$l$~--~see sec.~\ref{sec:dcn:base}), whereas~$\bar{\NN}$ is the network obtained by swapping dilations of even and odd layers (such that layer~$l$ has dilation~$2^{l-2}$ if $l$~is even, and~$2^l$ if $l$~is odd).
The mode trees corresponding to these networks, illustrated in fig.~\ref{fig:dilations_trees} (for the case~$L{:=}\log_{2}N{=}4$), share interior nodes $(k-1)N/2^l+[N/2^l]$ for $l\in\{2,4,\ldots,L\}$, $k\in[2^l]$.
We may therefore choose~$mix(T,\Tbar)$ to be all such nodes (excluding root), and get a mixed decomposition that corresponds to a mixed network interconnecting all even layers of~$\NN$ and~$\bar{\NN}$.
Illustrations of such decomposition and network (again, for the case~$L{=}4$) are given in fig.~\ref{fig:mix_trees_dcn}.

\medskip

The main advantage of the mixed decomposition (eq.~\ref{eq:mix_decomp}), and the reason for its definition, is that it leads to expressive efficiency.
That is to say, the mixed dilated convolutional network, formed by interconnecting intermediate layers of networks with different dilations, can realize functions that without the interconnections would be expensive, or even impractical to implement.
We theoretically support this in the next section, providing a complete proof for a special case of convolutional arithmetic circuits ($g(a,b)=a{\cdot}b$).

\begin{figure*}
\includegraphics[width=\textwidth]{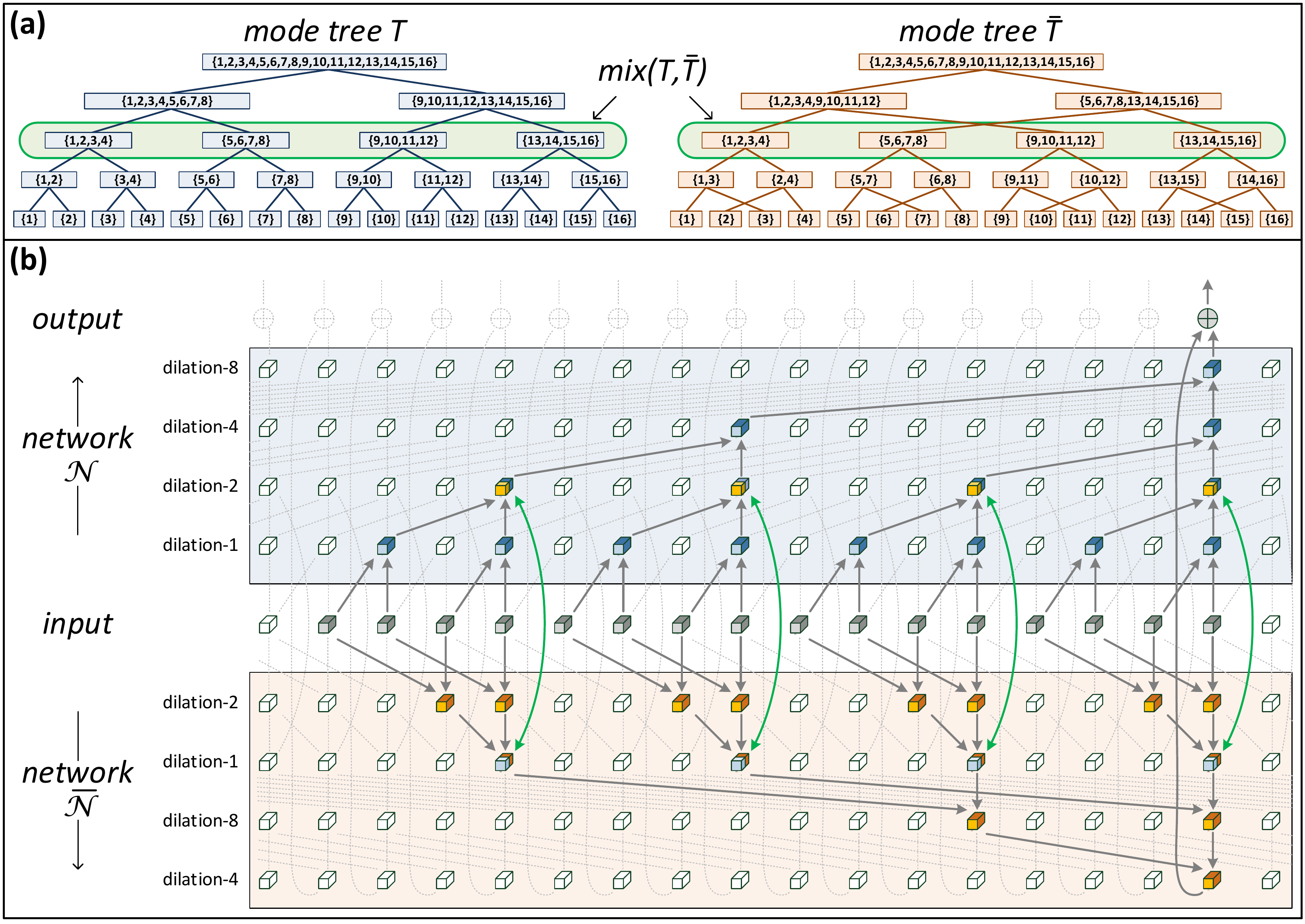}
\vspace{-7mm}
\caption{
To be viewed in color.
\textit{\textbf{(a)}}~Two mode trees~$T$ and~$\Tbar$ (given on the right of fig.~\ref{fig:dilations_trees}), along with a possible choice of mixture nodes~$mix(T,\Tbar)$ for the mixed decomposition (eq.~\ref{eq:mix_decomp}).
\textit{\textbf{(b)}}~Mixed dilated convolutional network resulting from the mixed decomposition.
The networks~$\NN$ and~$\NNbar$ corresponding to~$T$ and~$\Tbar$ respectively (fig.~\ref{fig:dilations_trees}, left), are combined through output summation and rewiring of an intermediate convolutional layer (green).
}
\label{fig:mix_trees_dcn}
\vspace{-5mm}
\end{figure*}

% EXPRESSIVE EFFICIENCY ANALYSIS
\section{Expressive Efficiency Analysis} \label{sec:analysis}

As in sec.~\ref{sec:mtd}, let~$\NN$ and~$\NNbar$ be two dilated convolutional networks whose input-output mappings are characterized by the tree decomposition (eq.~\ref{eq:tree_decomp}) with mode trees~$T$ and~$\Tbar$ respectively.
Consider the mixed decomposition (eq.~\ref{eq:mix_decomp}) resulting from a particular choice of mixture nodes $mix(T,\Tbar)$ (subset of the nodes interior to both~$T$ and~$\Tbar$), and denote its corresponding mixed dilated convolutional network by~$\M$.
We would like to show that~$\M$ is expressively efficient \wrt~$\NN$ and~$\NNbar$, meaning:
\emph{(i)}~any function realized by~$\NN$ or~$\NNbar$ can also be realized by~$\M$ with no more than linear growth in network size (number of channels in the convolutional layers);
\emph{(ii)}~there exist functions realizable by~$\M$ that cannot be realized by~$\NN$ or~$\NNbar$ (or a summation thereof) unless their size (number of convolutional channels) is allowed to grow super-linearly.
We study the representational abilities of networks through their corresponding tensor decompositions, which as discussed in sec.~\ref{sec:dcn}, parameterize discretizations of input-output mappings (grid tensors).
Through the lens of tensor decompositions, our objective is to address the following two propositions\endnote{
A few remarks are in order at this point:
\begin{itemize}
\item
The number of channels in each layer of~$\NN$ or~$\NNbar$ corresponds to the constant~$r$ in the respective tree decomposition (eq.~\ref{eq:tree_decomp} with underlying mode tree~$T$ or~$\Tbar$ respectively).
Similarly, the number of channels in each layer of each interconnected network in~$\M$ corresponds to~$r$ in the respective mixed decomposition (eq.~\ref{eq:mix_decomp}).
In both the tree and mixed decompositions,~$r$, referred to as the size constant, stands for the number of tensors~$\{\phi^{\nu,\gamma}\}_\gamma$ (respectively~$\{\phibar^{\nubar,\gamma}\}_\gamma$) held in each node~$\nu$ (respectively~$\nubar$).
We set this number uniformly across nodes, corresponding to uniformly sized layers across networks, merely for simplicity of presentation.
Our formulations and analysis can easily be adapted to account for varying layer sizes, by allowing different nodes in a decomposition to hold a different number of tensors.
Note that an implication of our uniform setting is that a network's input and output dimensions vary along with the size of its hidden layers.
When replicating a function realized by a network using a larger network, we simply pad input vectors with zeros, and ignore the excess output coordinates.
\item
An additional simplification we made relates to weight sharing.
In both the tree and mixed decompositions, each interior node~$\nu$  (respectively~$\nubar$) has a separate set of weights $\{\aaa^{\nu,\gamma,\rI},\aaa^{\nu,\gamma,\rII}\}_\gamma$ (respectively~$\{\aaabar^{\nubar,\gamma,\rI},\aaabar^{\nubar,\gamma,\rII}\}_\gamma$).
This implies that in the corresponding networks, convolution filters may vary through time, \ie~different weights may be used against different portions of a convolved sequence.
The more commonplace setting of stationary filters (standard convolutions) is obtained by restricting different nodes in a decomposition to possess the same weights.
We do not introduce such restrictions into our formulations, as they make little difference in terms of the analysis, but on the other hand significantly burden presentation.
\end{itemize}
}
(stated informally):
\begin{proposition}
\label{prop:tree_by_mix}
Consider a tree decomposition (eq.~\ref{eq:tree_decomp}) with underlying mode tree~$T$ or~$\Tbar$ and size constant~$r=r_{tree}$.
This decomposition can be realized by a mixed decomposition of~$T$ and~$\Tbar$ (eq.~\ref{eq:mix_decomp}) whose size constant~$r$ is linear in~$r_{tree}$.
\end{proposition}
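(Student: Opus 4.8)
The plan is to prove this directly by an explicit construction, treating only the tree decomposition of $T$ (the case of $\Tbar$ is obtained by exchanging the roles of the two trees). I would set the size constant of the mixed decomposition to $r=2r_{tree}$, which is linear in $r_{tree}$, and reserve the last $r_{tree}$ channel indices $\{r_{tree}{+}1,\ldots,2r_{tree}\}$ of the $T$-branch to carry a verbatim copy of the target tree decomposition, leaving everything else inert. The structural observation driving the whole construction is that the swap on line~$8$ of eq.~\ref{eq:mix_decomp} exchanges only the first $r/2=r_{tree}$ tensors at each mixture node; hence the reserved upper indices are \emph{never} swapped and propagate undisturbed through every mixture node, no matter how $mix(T,\Tbar)$ is chosen.

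Let $\{\psi^{\nu,\gamma}\}_{\gamma\in[r_{tree}]}$ and $\{b^{\nu,\gamma,\rI},b^{\nu,\gamma,\rII}\}$ denote the tensors and weights of the target tree decomposition (eq.~\ref{eq:tree_decomp} with tree $T$). I would embed the target's input space into the upper half of the larger network's input space, padding the remaining coordinates with zeros, so that the upper-half leaf tensors of the mixed decomposition coincide with the target leaves, $\phi^{\{j\},r_{tree}+k}=\psi^{\{j\},k}$. The $T$-branch upper-half weights are then set to mimic the target with no coupling to the lower half: $a^{\nu,r_{tree}+k,\rI}_\alpha:=b^{\nu,k,\rI}_{\alpha-r_{tree}}$ for $\alpha>r_{tree}$ and $:=0$ otherwise, and symmetrically for $\rII$. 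Since lines~$4$--$5$ visit the interior nodes of $T$ in inclusion (bottom-up) order and the upper indices are never swapped, a straightforward induction over the nodes of $T$ gives $\phi^{\nu,r_{tree}+k}=\psi^{\nu,k}$ for every node $\nu$ and every $k\in[r_{tree}]$, the inner weighted sums matching because each child's upper half already equals the corresponding $\psi$ and the permutation $\sigma^{(\nu;T)}$ is identical in both decompositions.

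It remains to neutralize the additive $\Tbar$-branch term produced at line~$9$ and to prevent the swaps from injecting garbage into the upper half via the lower half. I would set all $\Tbar$-branch weights and all $T$-branch lower-half output weights to zero; then every generalized product arising in the $\Tbar$-branch, and in the lower half of the $T$-branch, has both arguments equal to the zero vector, so each such tensor equals $\0\otimesg\0$. Whenever the convolutional operator satisfies $g(0,0)=0$ — which holds for the product operator $g(a,b)=a{\cdot}b$ of convolutional arithmetic circuits, and for ReLU-type operators — this equals the zero tensor, and a parallel induction shows every lower-half $T$-tensor and every $\Tbar$-tensor is zero. Each swap therefore exchanges $\0$ for $\0$ and changes nothing, so $\phibar^{[N],y}=\0$ for all $y$, and reading off output index $y=r_{tree}+k$ yields $\A^{r_{tree}+k}=\psi^{[N],k}$: the target grid tensors appear among the mixed decomposition's outputs (the excess coordinates are simply ignored), completing the realization with $r=2r_{tree}$.

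The main obstacle is exactly this final neutralization step: the mixed decomposition forcibly sums the two branch outputs and forcibly swaps half the channels at every mixture node, so realizing a single tree requires simultaneously silencing the other branch and the swapped channels. The robustness of the copy is guaranteed for free by the structural fact that the swap never touches the upper half (which is what makes the argument indifferent to the choice of mixture nodes); the remaining difficulty is the purely algebraic one of producing the zero tensor from an entire subtree, which is clean precisely when $\0\otimesg\0=\0$, and is the point at which the construction specializes to the convolutional operators of interest.
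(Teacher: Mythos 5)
Your construction is correct, and its engine is the same one the paper uses: a constructive weight assignment that reserves half the channels of the mixed decomposition (eq.~\ref{eq:mix_decomp}) to carry a verbatim copy of the target tree decomposition, zeroes out all remaining weights, and exploits the fact that the swap in line~$8$ only ever touches the first $r/2$ channels, so the reserved half passes through every mixture node untouched. The difference is one of scope. The paper never proves prop.~\ref{prop:tree_by_mix} directly; it derives it as the special case $H\in\{T,\Tbar\}$ of claim~\ref{claim:hybrid_tree_by_mix}, which shows the mixed decomposition can realize the tree decomposition of \emph{any} hybrid mode tree (def.~\ref{def:hybrid_tree}). That extra generality forces the paper's assignment (eq.~\ref{eq:hybrid_tree_by_mix_assignment} in app.~\ref{app:proofs:hybrid_tree_by_mix}) to carry tree-transitioning bookkeeping~--~which half of the weight vectors, and which coordinates within them, are used at a node $\nu$ depends on whether $P(\nu;H)$, $C_\rI(\nu;H)$ and $C_\rII(\nu;H)$ originate from $T$ or from $\Tbar$~--~whereas your specialization avoids this entirely: since your copy never needs to cross between branches, it can stay in the upper channels all the way to the root, and you read it off the upper output indices rather than routing it through the root-level swap into the lower indices as the paper does. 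The price is that your direct argument proves only prop.~\ref{prop:tree_by_mix}, while claim~\ref{claim:hybrid_tree_by_mix} is load-bearing for prop.~\ref{prop:mix_by_tree} as well (the lower bound there needs the mixed decomposition to realize a hybrid tree genuinely different from $T$ and $\Tbar$), so within the paper's overall plan the generality is not optional. Two smaller observations: your explicit hypothesis $g(0,0)=0$ for the neutralization step is not a defect relative to the paper~--~the paper's construction needs it just as much, since its unused nodes likewise produce all-$g(0,0)$ tensors that are added into the output at line~$9$, and the paper simply leaves this implicit~--~so flagging it is an improvement in care; and your padding convention (zeros in the low coordinates of the discretizers) is the mirror image of the paper's, which is immaterial.
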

\begin{proposition}
\label{prop:mix_by_tree}
Consider a mixed decomposition of~$T$ and~$\Tbar$ (eq.~\ref{eq:mix_decomp}) with size constant~$r=r_{mix}$.
This decomposition can generate grid tensors~$\{\A^y\}_y$ that cannot be generated by tree decompositions of~$T$ or~$\Tbar$ (eq.~\ref{eq:tree_decomp}), or a summation of such, unless their size constant~$r$ is super-linear in~$r_{mix}$.
\end{proposition}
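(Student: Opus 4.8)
The plan is to specialize to convolutional arithmetic circuits, i.e.\ $g(a,b)=a{\cdot}b$ so that $\otimesg$ is the ordinary tensor product, and to drive the entire separation through the rank of a single, carefully chosen matricization. By claim~\ref{claim:hybrid_tree_by_mix} the mixed decomposition of $T$ and $\Tbar$ with size constant $r_{mix}$ reproduces, with at most linear growth, the tree decomposition (eq.~\ref{eq:tree_decomp}) of \emph{any} hybrid mode tree $H$ (def.~\ref{def:hybrid_tree}). It therefore suffices to exhibit one hybrid tree $H$, one mode set $\I\subset[N]$, and one weight assignment for which the grid tensor $\A$ generated by the tree decomposition of $H$ satisfies $\textrm{rank}\,\mat{\A}_{\I}=R$, while every grid tensor produced by a tree decomposition of $T$ or of $\Tbar$ with size constant $r$ obeys $\textrm{rank}\,\mat{\cdot}_{\I}\le c\cdot r$ for a constant $c$. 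If $R$ is of order $r_{mix}^2$ while the $T,\Tbar$ ceiling is only linear in $r$, then reproducing $\A$ forces $r\in\Omega(r_{mix}^2)$, which is super-linear (almost quadratic) in $r_{mix}$ and establishes the proposition.

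Both rank estimates are to be read off from theorem~\ref{theorem:tree_decomp_ranks}, which ties the attainable value of $\textrm{rank}\,\mat{\A}_{\I}$ to the way the partition $(\I,\I^c)$ meets the mode tree. For the upper bound I would take $\I$ to be a node common to $T$ and $\Tbar$ — concretely a contiguous dyadic block sitting at an even level of both trees (fig.~\ref{fig:dilations_trees}), such as $\I=[N/4]$. Since $\I$ labels an actual node in each of these trees, the grid tensor factors through the $r$ tensors $\{\phi^{\I,\gamma}\}_\gamma$ residing there, the partition severs a single edge, and theorem~\ref{theorem:tree_decomp_ranks} caps $\textrm{rank}\,\mat{\A}_{\I}$ at essentially $r$ for every $T$- or $\Tbar$-tensor. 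Subadditivity of matrix rank, $\textrm{rank}\,\mat{\A+\B}_{\I}\le\textrm{rank}\,\mat{\A}_{\I}+\textrm{rank}\,\mat{\B}_{\I}$, then propagates this linear ceiling to any finite summation of $T$- and $\Tbar$-decompositions, settling the ``or a summation of such'' clause.

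For the lower bound the point is that the \emph{same} block $\I$ can be shattered by a hybrid tree. Although $\I=[N/4]$ is a node of both $T$ and $\Tbar$, the two trees reach it by different routes, so a hybrid $H$ that first splits the root the way $T$ does (into contiguous halves) and then splits the resulting half by the interleaving rule of $\Tbar$ (first and third quadrants versus second and fourth) scatters the modes of $[N/4]$ across two disjoint subtrees; $[N/4]$ is then \emph{not} a node of $H$. The partition $(\I,\I^c)$ now cuts two independent edges of $H$, and theorem~\ref{theorem:tree_decomp_ranks} permits $\textrm{rank}\,\mat{\A}_{\I}$ as large as $R\sim r_{mix}^2$ (up to the cap $\min(M^{\abs{\I}},M^{\abs{\I^c}})$ and the factor-of-two losses inherent in the half-channel swap, which is what renders the gap \emph{almost} rather than exactly quadratic). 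To convert this upper envelope into an attained value I would argue by genericity: the entries of $\mat{\A}_{\I}$ are multilinear polynomials in the decomposition weights, so it is enough to display one assignment — with discretizers $\vv^{(1)}\ldots\vv^{(M)}$ chosen so the leaf matrices have full rank — at which a single $R{\times}R$ minor is non-zero; the locus where the rank drops below $R$ is then the common zero set of those minors, a proper (hence measure-zero) algebraic subset of parameter space, so $R$ is attained for almost all weights.

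The main obstacle I anticipate is the combinatorial core of the third paragraph: pinning down a hybrid tree $H$ — verifying that it is admissible under def.~\ref{def:hybrid_tree} and realizable by the mixed decomposition of eq.~\ref{eq:mix_decomp}, ideally through a single mixture node — whose internal wiring genuinely tears a block that nevertheless remains a node of both $T$ and $\Tbar$, and then checking through the \emph{exact} (not merely upper-bounding) form of theorem~\ref{theorem:tree_decomp_ranks} that this configuration really produces the asymmetric ranks $\Theta(r)$ on one side and $\Theta(r_{mix}^2)$ on the other. A looser bound for $H$ would shrink the almost-quadratic gap to a linear one and the separation would collapse, so the precise shapes of the baseline and dilation-swapped trees, together with the tightness of the rank theorem, are exactly what make the argument go through.
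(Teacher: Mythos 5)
Your high-level strategy is exactly the paper's --- specialize to $g(a,b)=a{\cdot}b$, invoke claim~\ref{claim:hybrid_tree_by_mix} to reduce the proposition to exhibiting a hybrid tree $H$ and an index set $\I$ for which theorem~\ref{theorem:tree_decomp_ranks} forces a rank gap --- but your concrete construction fails at precisely the point you flag as ``the main obstacle,'' and that obstacle is not a technicality: it is insurmountable for your choice of $\I$. You take $\I=[N/4]$, a node common to $T$ and $\Tbar$, and posit a hybrid $H$ that shatters it. No such hybrid exists. Under def.~\ref{def:hybrid_tree}, $int(H)$ is a union of segments, each consisting of \emph{actual interior nodes} of $T$ or of $\Tbar$, with transitions between trees occurring only at mixture nodes, which by definition lie in $int(T){\cap}int(\Tbar)$. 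If $\nu$ is an interior node of \emph{both} trees, let $\mu$ be the minimal mixture node (or the root) containing it; then $\nu$ lies in the segment of $\mu$ taken from $T$ \emph{and} in the segment of $\mu$ taken from $\Tbar$, so $\nu\in int(H)$ no matter which choices define $H$. In particular $[N/4]$ is a node of every admissible hybrid. Your specific proposal --- split the root as $T$ does into contiguous halves, then split $[N/2]$ by $\Tbar$'s interleaving rule --- is not an admissible hybrid: $[N/2]$ is not a node of $\Tbar$ at all ($\Tbar$'s depth-$1$ nodes are the interleaved halves), so no $\Tbar$-segment prescribes how to split it, and $[N/2]$ cannot serve as a transition point since it cannot be a mixture node. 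Consequently $\abs{\Theta(\I;H)}=1$ for every hybrid $H$, theorem~\ref{theorem:tree_decomp_ranks} caps the rank at $r$ on \emph{both} sides of your comparison (the same cap applies to the mixed decomposition itself, whose output factors through the tensors held at the common node $[N/4]$, giving rank at most $2r_{mix}$ there), and no separation --- let alone a quadratic one --- can be extracted from this $\I$. The template is unfixable in general: for $\I$ to give a linear cap on both $T$ and $\Tbar$ it must be a node of both, and then it is a node of $H$ too.

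The paper's way around this is to abandon index sets that are nodes of any tree and let the separation live in \emph{exponents} rather than in ``one cut versus two cuts.'' It takes the scattered set $\I:=\{2k-1:k\in[N/4]\}\cup\{N/2+4k-k':k\in[N/8],\,k'=2,3\}$, which $T$ and $\Tbar$ each tile into $N/4$ singletons plus $N/8$ sibling pairs ($\abs{\Theta(\I;T)}=3N/8$, each piece having a sibling in the complement's tiling), giving rank exactly $r^{3N/8}$, whereas a hybrid $H$ (top part from $\Tbar$, lower-half subtrees from $T$, mixture nodes at depth $L{-}2$) tiles $\I$ and $\I^c$ entirely into singletons, giving rank $r^{N/2}$; matching then forces size constant at least $r^{4/3}$, which is super-linear and proves the proposition. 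Note also that the ``almost quadratic'' gap does not come from the factor-of-two channel loss you invoke --- that only contributes the constant in $(r/2)^{2/(1+2^{1-k})}$ of corollary~\ref{corollary:mix_by_tree}; it comes from generalizing to dilation swaps of groups of $k$ layers, where the exponent $2/(1+2^{1-k})$ tends to $2$. Your subadditivity argument for the ``or a summation of such'' clause and your genericity argument (a non-vanishing minor, zero sets of polynomials having measure zero) are both sound and are indeed the paper's tools, but they cannot rescue a configuration in which the two ranks being compared coincide.
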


\medskip

Before heading to a formal treatment of prop.~\ref{prop:tree_by_mix} and~\ref{prop:mix_by_tree}, we briefly convey the intuition behind our analysis.
Recall from sec.~\ref{sec:mtd} that the mixed decomposition (eq.~\ref{eq:mix_decomp}) blends together tree decompositions (eq.~\ref{eq:tree_decomp}) of different mode trees~$T$ and~$\Tbar$, by traversing upwards through the trees, while exchanging tensors at each of a preselected set of mixture nodes.
We may think of each mixture node as a decision point that can propagate upwards one of two computations~--~that carried out by~$T$ or that carried out by~$\Tbar$, where in both cases, the chosen computation is propagated upwards through both~$T$ and~$\Tbar$.
Each combination of decisions across all mixture nodes gives rise to a computational path traversing between~$T$ and~$\Tbar$, equivalent to a tree decomposition based on a \emph{hybrid mode tree} (see illustration in fig.~\ref{fig:hybrid_trees}).
The number of possible hybrid trees is exponential in the number of mixture nodes, and thus a mixed decomposition is comparable to an exponential ensemble of tree decompositions.
The original tree decompositions, based on~$T$ and~$\Tbar$, are included in the ensemble, thus may easily be replicated by the mixed decomposition.
On the other hand, many of the hybrid trees in the mixed decomposition are significantly different from~$T$ and~$\Tbar$, requiring large size constants from their tree decompositions.

\medskip

As a first step in formalizing the above intuition, we define the notion of a hybrid mode tree:
\begin{definition}
\label{def:hybrid_tree}
Let~$T$ and~$\Tbar$ be binary mode trees over~$[N]$ (def.~\ref{def:tree}), and let~$mix(T,\Tbar)$ be a corresponding collection of mixture nodes, \ie~a set of nodes (subsets of~$[N]$) contained in the interior of both~$T$ and~$\Tbar$.
We say that~$H$ is a \emph{hybrid mode tree} of~$T$ and~$\Tbar$ \wrt~$mix(T,\Tbar)$, if it is a binary mode tree over~$[N]$, whose interior may be generated by the following process:
\vspace{-2mm}
\beas
&&int(H)=\emptyset
\\[0mm]
&&\text{For $\mu$ in $mix(T,\Tbar)\cup\{[N]\}$ (inclusion order):}
\\[1mm]
&&{\qquad}S=int(T)\cap2^\mu\setminus\{\text{nodes in $T$ already assigned to $S$}\}
\\[1mm]
&&\qquad\Sbar=int(\Tbar)\cap2^\mu\setminus\{\text{nodes in $\Tbar$ already assigned to $\Sbar$}\}
\\[1mm]
&&{\qquad}int(H)=int(H){\cup}S~~~\text{\emph{\textbf{or}}}~~~int(H)=int(H){\cup}\Sbar
\eeas
In words, for every~$\mu$ that is either a mixture node or the root node, $int(H)$~includes a \emph{segment} from either~$int(T)$ or~$int(\Tbar)$, where the segment comprises all descendants of~$\mu$ (including~$\mu$ itself) from which the path to~$\mu$ does not cross any other mixture node (see illustration in fig.~\ref{fig:hybrid_trees}).
\end{definition}

Claim~\ref{claim:hybrid_tree_by_mix} below states that with proper weight setting, a mixed decomposition of~$T$ and~$\Tbar$ (eq.~\ref{eq:mix_decomp}) with size constant~$r{=}r_{mix}$ can realize any tree decomposition (eq.~\ref{eq:tree_decomp}) with size constant~$r{=}r_{mix}/2$, if the underlying mode tree is a hybrid of~$T$ and~$\Tbar$.
Since~$T$ and~$\Tbar$ are in particular hybrid mode trees of themselves, we obtain an affirmative answer to prop.~\ref{prop:tree_by_mix}.

\begin{claim}
\label{claim:hybrid_tree_by_mix}
Let~$T$ and~$\Tbar$ be binary mode trees over~$[N]$ (def.~\ref{def:tree}), and let~$mix(T,\Tbar)$ be a corresponding collection of mixture nodes (a set of nodes contained in the interior of both~$T$ and~$\Tbar$).
Consider a mixed decomposition of~$T$ and~$\Tbar$ \wrt~$mix(T,\Tbar)$ (eq.~\ref{eq:mix_decomp}), and denote its size constant~$r$ by~$r_{mix}$.
Let~$H$ be a hybrid mode tree of~$T$ and~$\Tbar$ \wrt~$mix(T,\Tbar)$ (def.~\ref{def:hybrid_tree}), and consider the respective tree decomposition (eq.~\ref{eq:tree_decomp}), with size constant~$r{=}r_{mix}/2$.
For any setting of weights~$\{\aaa^{\nu,\gamma,\rI},\aaa^{\nu,\gamma,\rII}\}_{\nu,\gamma}$ leading to grid tensors~$\{\A^y\}_y$ in this tree decomposition, there exists a setting of weights~$\{\aaa^{\nu,\gamma,\rI},\aaa^{\nu,\gamma,\rII}\}_{\nu,\gamma}$ and~$\{\aaabar^{\nubar,\gamma,\rI},\aaabar^{\nubar,\gamma,\rII}\}_{\nubar,\gamma}$ in the mixed decomposition, independent of the discretizers~$\vv^{(1)}\ldots\vv^{(M)}$ (see sec.~\ref{sec:dcn}), that leads to the same grid tensors.\endnote{
In accordance with the remark given at the beginning of this section, when using the (larger) mixed decomposition, we pad discretizers with zeros, and ignore the excess output tensors.
}
\end{claim}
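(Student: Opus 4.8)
The plan is to give an explicit construction of mixed-decomposition weights (eq.~\ref{eq:mix_decomp}) that simulates, track by track, the tree decomposition of $H$ (eq.~\ref{eq:tree_decomp}), exploiting the two-fold redundancy $r_{mix}=2\cdot(r_{mix}/2)$ to route the active computation between the $T$-track ($\phi$) and the $\Tbar$-track ($\phibar$) at each swap. First I would record the structure of $H$ implicit in def.~\ref{def:hybrid_tree}: the mixture nodes together with the root partition $int(H)$ into \emph{bands}, one per $\mu\in mix(T,\Tbar)\cup\{[N]\}$, where the band of $\mu$ consists of the descendants of $\mu$ (up to and including $\mu$) reachable without crossing another mixture node, and is copied verbatim from either $int(T)$ or $int(\Tbar)$. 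Denote by $t(\mu)\in\{T,\Tbar\}$ the tree contributing $\mu$'s band. The key observation is that the tree decomposition of $H$ factorizes over bands: writing $\psi^{\nu,\gamma}$ ($\gamma\in[r_{mix}/2]$) for its tensors, $\psi^{\mu}$ is obtained by running the $t(\mu)$-combination rule \emph{within} the band of $\mu$, starting from the tensors $\psi^{\mu'}$ at the children mixture nodes $\mu'$ at the band's bottom. This is exactly the computation the mixed decomposition performs on the $t(\mu)$-track during outer-loop iteration $\mu$ (lines 4--5 if $t(\mu)=T$, lines 6--7 if $t(\mu)=\Tbar$), so the only real work is to make each $\psi^{\mu'}$ available on the correct track at the correct moment.

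Here the factor of two enters. Within each track I reserve the first $r_{mix}/2$ slots as a ``swap lane'' (the coordinates exchanged in line 8) and the last $r_{mix}/2$ slots as a ``keep lane'' (never swapped); the $r_{mix}/2$ tensors of $H$ fit in either lane. I would maintain the invariant that, immediately after the swap at a non-root mixture node $\mu'$, the tensor $\psi^{\mu'}$ resides on the track $t(P)$ used by the parent band, where $P$ is the smallest node of $mix(T,\Tbar)\cup\{[N]\}$ strictly containing $\mu'$ — in the swap lane when $t(P)\neq t(\mu')$, and in the keep lane when $t(P)=t(\mu')$. This is arranged by choosing the weights at the top node of $\mu'$'s band so that $\psi^{\mu'}$ is written into the swap lane of track $t(\mu')$ when the tree changes above $\mu'$ (the swap then transports it to the swap lane of the complementary track, which is $t(P)$) and into the keep lane of track $t(\mu')=t(P)$ when it does not (the swap leaves it untouched). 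The parent band's weights are then set to read $\psi^{\mu'}$ from precisely this lane and to replicate $H$'s linear combinations; the complementary track and the unused lanes carry harmless garbage that no band reads.

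With the invariant in place the proof is an induction over $mix(T,\Tbar)\cup\{[N]\}$ in inclusion order. The base case is the leaves: line 2 sets $\phi^{\{j\},\gamma}=\phibar^{\{j\},\gamma}$ to the (zero-padded) discretizer coordinates, so the first $r_{mix}/2$ slots of both tracks coincide with $H$'s leaf tensors, giving the lowest bands correct inputs. The inductive step is the band computation described above, which produces $\psi^{\mu}$ on track $t(\mu)$ and, after the swap at $\mu$, re-establishes the invariant for $\mu$. At the root I place $\psi^{[N]}$ in the keep lane of track $t([N])$, so the (always-present) root swap cannot disturb it, and I zero out the keep lane of the complementary track at the root; reading the output from the keep-lane indices then gives $\A^y=\phi^{[N],y}+\phibar^{[N],y}$ equal to the corresponding tensor of $H$, matching it on the non-excess outputs. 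Since every weight is fixed in terms of $H$'s weights and the lane choices alone, the construction is symbolic in the discretizers and the grid tensors agree for every choice of $\vv^{(1)}\ldots\vv^{(M)}$.

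The main obstacle is the routing bookkeeping: one must check that, simultaneously for every band, the weights can be chosen to read each child's $\psi^{\mu'}$ from exactly the lane and track to which the swap at $\mu'$ delivered it, while never reading the garbage that the complementary-track computations and unused lanes deposit — and that these garbage values, though propagated by the decomposition, are consumed by no band and hence cannot corrupt any $\psi^{\mu}$. Because every read is selected by the weight vectors $\aaa^{\nu,\gamma,\rI},\aaa^{\nu,\gamma,\rII}$, this reduces to a careful but mechanical consistency check. The only place the argument touches the form of $g$ is the final zeroing of the complementary track at the root: since $\0\otimesg\0$ has entries $g(0,0)$, it is immediate when $g(0,0)=0$, as holds for $g(a,b)=a{\cdot}b$ and $g(a,b)=\max\{a{+}b,0\}$.
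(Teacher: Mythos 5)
Your proposal is correct and follows essentially the same route as the paper's proof: an explicit constructive weight assignment in which each segment of $H$ is simulated on the track of its source tree, with the low indices $\gamma\in[r_{mix}/2]$ serving as the ``swap lane'' and the high indices as the ``keep lane,'' so that writing into the swap lane exactly when the source tree changes above a mixture node (and reading from the lane where the child's tensor was delivered) routes the computation correctly~--~this is precisely the paper's rule of selecting weight-vector indices by the parent's source tree and weight-vector coordinates by the children's source trees, with all remaining weights zeroed. Your band-wise induction with an explicit invariant is just a reorganization of the paper's node-wise assignment, and your closing observation that killing the complementary track requires $g(0,0)=0$ is a point the paper's proof uses implicitly (via its all-zero initialization) but does not state, so it is a welcome refinement rather than a deviation.
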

\begin{proof}[\emph{sketch~--~complete proof is given in app.~\ref{app:proofs:hybrid_tree_by_mix}}]
We prove the claim constructively, by assigning weights to the mixed decomposition of~$T$ and~$\Tbar$ such that it mimics the tree decomposition of~$H$.
For distinction, we add the relevant mode tree to our notation of weights, letting~$\{\aaa^{T,\nu,\gamma,\rI},\aaa^{T,\nu,\gamma,\rII}\in\R^{r_{mix}}\}_{\nu{\in}int(T),\gamma\in[r_{mix}]}$ and $\{\aaa^{\Tbar,\nu,\gamma,\rI},\aaa^{\Tbar,\nu,\gamma,\rII}\in\R^{r_{mix}}\}_{\nu{\in}int(\Tbar),\gamma\in[r_{mix}]}$ stand for the weights in the mixed decomposition of~$T$ and~$\Tbar$, while~$\{\aaa^{H,\nu,\gamma,\rI},\aaa^{H,\nu,\gamma,\rII}\in\R^{r_{mix}/2}\}_{\nu{\in}int(H),\gamma\in[r_{mix}/2]}$ represent the weights in the tree decomposition of~$H$.
We also denote by~$t(\nu)$, for each node~$\nu{\in}int(H)$, the source tree ($T$~or~$\Tbar$) from which it originated (see def.~\ref{def:hybrid_tree}).

The assignment of~$\{\aaa^{T,\nu,\gamma,\rI},\aaa^{T,\nu,\gamma,\rII}\}_{\nu,\gamma}$ and~$\{\aaa^{\Tbar,\nu,\gamma,\rI},\aaa^{\Tbar,\nu,\gamma,\rII}\}_{\nu,\gamma}$ proceeds as follows.
All weights are initialized with zeros.
Afterwards, for each~$\nu{\in}int(H)$, $\{\aaa^{H,\nu,\gamma,\rI},\aaa^{H,\nu,\gamma,\rII}\}_{\gamma}$ (weights of~$\nu$ in the tree decomposition of~$H$) are copied into~$\{\aaa^{t(\nu),\nu,\gamma,\rI},\aaa^{t(\nu),\nu,\gamma,\rII}\}_{\gamma}$ (weights of the respective node at the respective source tree in the mixed decomposition of~$T$ and~$\Tbar$).
There are twice as many vectors in~$\{\aaa^{t(\nu),\nu,\gamma,\rI},\aaa^{t(\nu),\nu,\gamma,\rII}\}_{\gamma}$ than there are in~$\{\aaa^{H,\nu,\gamma,\rI},\aaa^{H,\nu,\gamma,\rII}\}_{\gamma}$, and each vector is of twice the dimension.
The choices of which vectors to use, and which coordinates to use in the selected vectors, are made such that computations traverse properly between trees.
Specifically, if~$P(\nu;H)$ (parent of~$\nu$ in~$H$) originated from the same tree ($T$~or~$\Tbar$) as~$\nu$, higher index vectors ($r_{mix}/2<\gamma\leq{r_{mix}}$) in~$\{\aaa^{t(\nu),\nu,\gamma,\rI},\aaa^{t(\nu),\nu,\gamma,\rII}\}_{\gamma}$ are used, as they correspond to computations (tensors) that are not exchanged between trees (see eq.~\ref{eq:mix_decomp}).
On the other hand, if~$P(\nu;H)$ originated from the tree opposite to~$t(\nu)$, lower index vectors ($1\leq\gamma\leq{r_{mix}/2}$) in~$\{\aaa^{t(\nu),\nu,\gamma,\rI},\aaa^{t(\nu),\nu,\gamma,\rII}\}_{\gamma}$ are used, in accordance with the fact that they realize computations (tensors) that will be transferred between trees.
The second aforementioned choice~--~which coordinates to use in the selected vectors, is made analogously, based on the trees from which~$C_{\rI}(\nu;H)$ and~$C_{\rII}(\nu;H)$ (children of~$\nu$ in~$H$) came from.
\end{proof}

\begin{figure*}
\includegraphics[width=\textwidth]{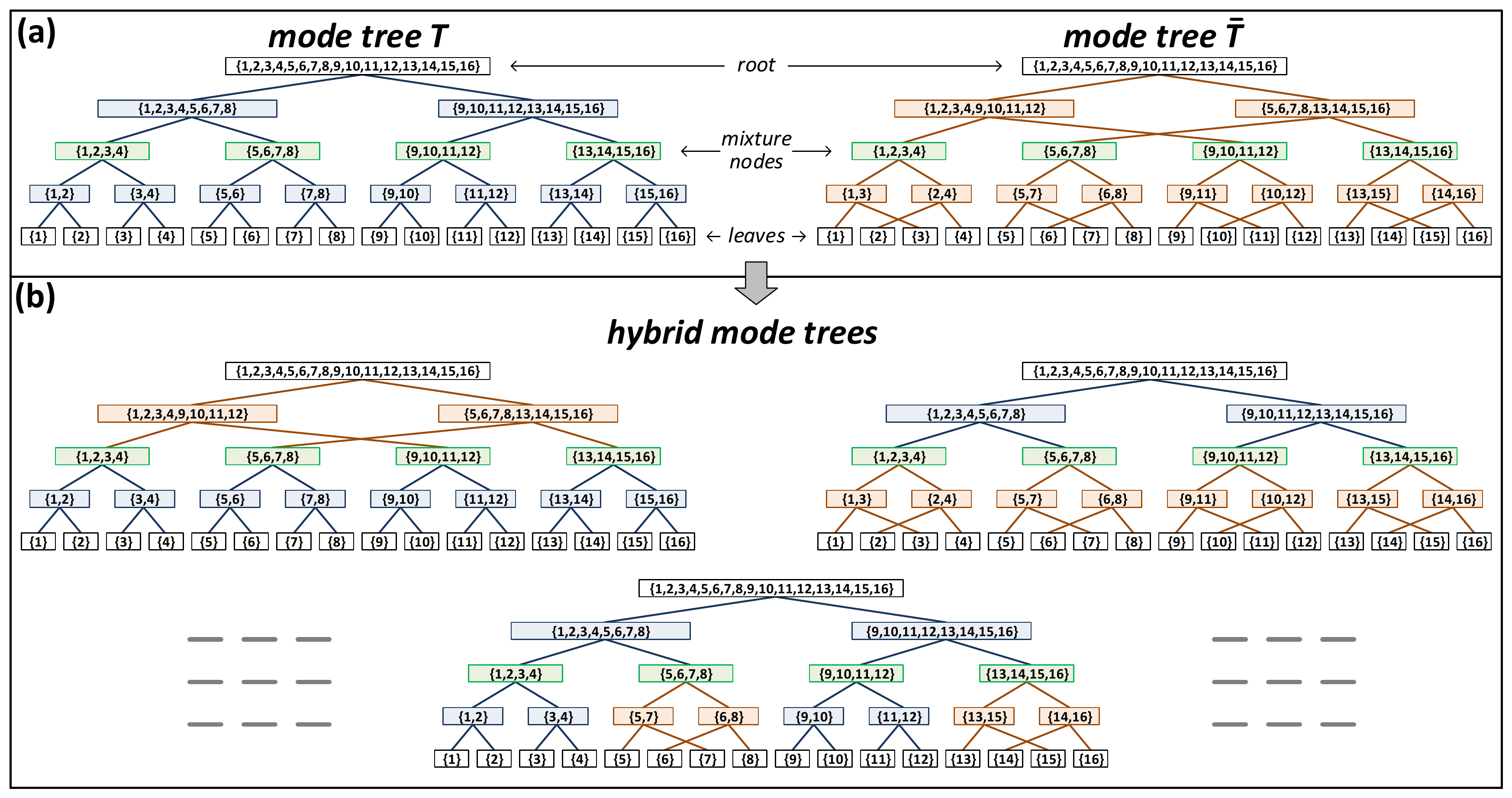}
\vspace{-7mm}
\caption{
Best viewed in color.
\textit{\textbf{(a)}}~Two mode trees~$T$ and~$\Tbar$ along with a possible choice of mixture nodes (same as in fig.~\ref{fig:mix_trees_dcn}(a)).
\textit{\textbf{(b)}}~Sample of the resulting hybrid mode trees (def.~\ref{def:hybrid_tree}).
Each hybrid tree is a combination of segments from~$T$ and~$\Tbar$, where a segment comprises all non-leaf descendants of a certain root node or mixture node, from which the path to that node does not cross any other mixture node.
}
\label{fig:hybrid_trees}
\vspace{-3mm}
\end{figure*}

Claim~\ref{claim:hybrid_tree_by_mix} not only addresses prop.~\ref{prop:tree_by_mix}, but also brings forth a strategy for treating prop.~\ref{prop:mix_by_tree}.
The strategy is to find a hybrid mode tree~$H$ distinct enough from~$T$ and~$\Tbar$, such that its tree decomposition, easily realized by the mixed decomposition according to claim~\ref{claim:hybrid_tree_by_mix}, poses a significant challenge for the individual tree decompositions of~$T$ and~$\Tbar$.
Hereinafter we pursue this line of reasoning, focusing on the particular case where the convolutional operator~$g(\cdot)$ is a simple product~--~$g(a,b){=}a{\cdot}b$.
In this case the tree and mixed decompositions (eq.~\ref{eq:tree_decomp} and~\ref{eq:mix_decomp} respectively) are standard (non-generalized) tensor decompositions~($\otimesg{\equiv}\otimes$~--~see sec.~\ref{sec:prelim}), and the corresponding dilated convolutional networks are convolutional arithmetic circuits.
We focus on this special case since it allows the use of a plurality of algebraic tools for theoretical analysis, while at the same time corresponding to models showing promising results in practice (see for example~\cite{\deepsimnets,\tmm}).
Full treatment of additional cases, such as~$g(a,b){=}\max\{a+b,0\}$, corresponding to networks with ReLU activation, is left for future work.

\medskip

For establishing the difficulty experienced by the tree decompositions of~$T$ and~$\Tbar$ in replicating that of a hybrid tree~$H$, we analyze ranks of matricized grid tensors.
Specifically, we consider the tree decomposition (eq.~\ref{eq:tree_decomp}) of a general mode tree, and derive tight upper and lower bounds on the ranks of generated grid tensors when these are subject to matricization \wrt~a general index set~$\I\subset[N]$ (see sec.~\ref{sec:prelim}).
The bounds we derive (theorem~\ref{theorem:tree_decomp_ranks} below) highly depend on both the underlying mode tree and the index set, and this allows finding index sets for which ranks tend to be higher with the hybrid mode tree~$H$ than they are with the original mode trees~$T$ and~$\Tbar$.
Under such index sets, the only way for~$T$ and~$\Tbar$ to match ranks generated with~$H$ is through a significant increase in the size constant of their tree decompositions~--~precisely the sought-after result.

\begin{figure*}
\vspace{-3mm}
\includegraphics[width=\textwidth]{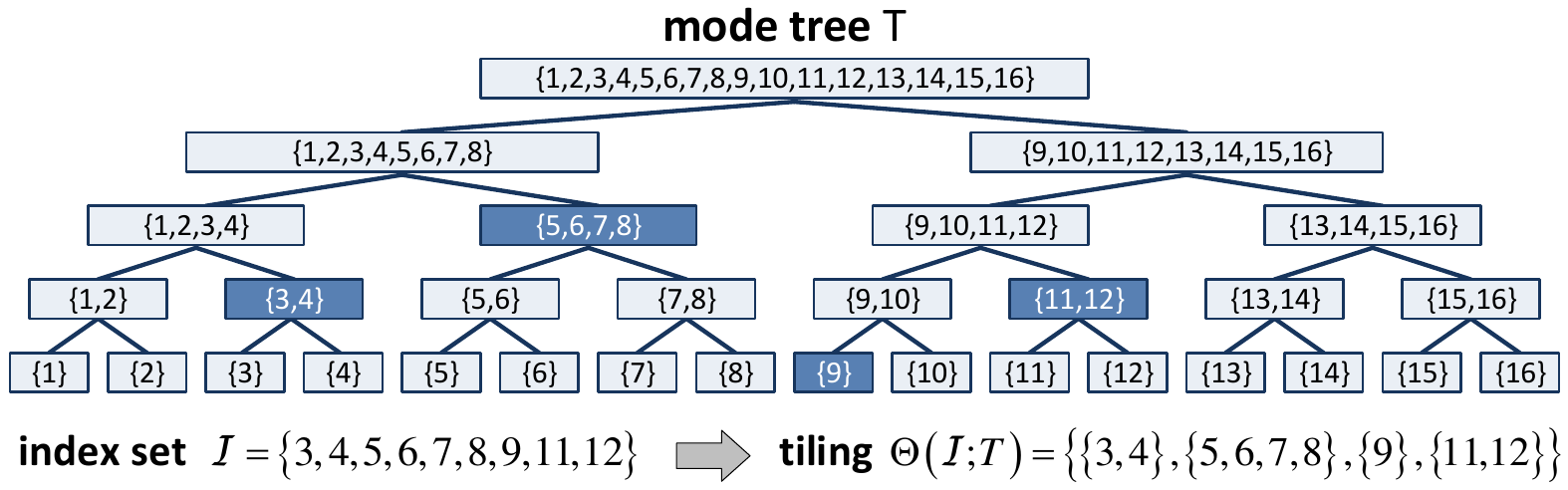}
\vspace{-7mm}
\caption{
Mode tree~$T$ along with a specific index set~$\I$ and the resulting tiling~$\Theta(\I;T)$ (def.~\ref{def:tiling}).
}
\label{fig:tiling}
\end{figure*}

To crisply phrase our main theorem, we define the notion of an index set tiled by a mode tree:
\begin{definition}
\label{def:tiling}
Let~$T$ be a binary mode tree over~$[N]$ (def.~\ref{def:tree}), and let~$\I\subset[N]$ be a non-empty set of indexes.
A \emph{tiling} of~$\I$ by~$T$ is a collection of nodes in the tree, denoted~$\Theta(\I;T)$, which meets the following requirements:
\begin{itemize}
\vspace{-2mm}
\item $\bigcup_{\nu\in\Theta(\I;T)}~\nu=\I$
\vspace{-1mm}
\item $\nu\in\Theta(\I;T){\implies}P(\nu;T)\not\subset\I$
\vspace{-1mm}
\end{itemize}
In words, $\Theta(\I;T)$~is a set of nodes in~$T$ whose disjoint union gives~$\I$, where each node is maximal, \ie~its parent in the tree is not a subset of~$\I$ (see illustration in fig.~\ref{fig:tiling}).
\end{definition}
It is not difficult to see that for any mode tree~$T$ and non-empty index set~$\I$, the tiling~$\Theta(\I;T)$ exists and is determined uniquely.
As theorem~\ref{theorem:tree_decomp_ranks} below states, this tiling, along with that of $\I$'s complement ($\I^c{:=}[N]{\setminus}\I$), characterizes the ranks of grid tensors generated by the tree decomposition of~$T$ when these are matricized \wrt~$\I$.

\begin{theorem}
\label{theorem:tree_decomp_ranks}
Let~$T$ be a binary mode tree over~$[N]$ (def.~\ref{def:tree}), and consider the corresponding tree decomposition (eq.~\ref{eq:tree_decomp}) with discretizers $\vv^{(1)}\ldots\vv^{(M)}$ spanning~$\R^r$.
Assume that~$g(\cdot)$ is the product operator ($g(a,b)=a{\cdot}b$), and suppose the generated grid tensors~$\{\A^y\}_y$ are matricized (see sec.~\ref{sec:prelim}) \wrt~an index set~$\I\subset[N]$,~$\emptyset\neq\I\neq[N]$, whose complement we denote by~$\I^c:=[N]\setminus\I$.
Then, the ranks of the grid tensor matricizations~$\{\mat{\A^y}_\I\}_y$ are:
\begin{itemize}
\vspace{-1mm}
\item no greater than~$r^{\min\{\abs{\Theta(\I;T)},\abs{\Theta(\I^c;T)}\}}$
\vspace{-1mm}
\item at least $r^{\abs{\{(\nu_1,\nu_2)\in\Theta(\I;T)\times\Theta(\I^c;T):~\text{$\nu_1$ and $\nu_2$ are siblings in~$T$ with depth\textgreater$1$}\}}}$ almost always, \ie~for all configurations of weights~$\{\aaa^{\nu,\gamma,\rI},\aaa^{\nu,\gamma,\rII}\}_{\nu,\gamma}$ but a set of Lebesgue measure zero
\vspace{-1mm}
\end{itemize}
\end{theorem}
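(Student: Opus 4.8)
The plan is to prove the two bounds separately, in both cases propagating rank information recursively through the tree. The one algebraic fact driving everything is that matricization turns the (generalized, here ordinary) tensor product into a Kronecker product: for a node $\nu$ with children $c_1:=C_\rI(\nu;T)$ and $c_2:=C_\rII(\nu;T)$, applying to eq.~\ref{eq:tree_decomp} the matricization that sends modes in $\I$ to rows and modes in $\I^c$ to columns gives, up to a fixed permutation of rows and of columns (induced by $\sigma^{(\nu;T)}$, hence rank-neutral), $\mat{\phi^{\nu,\gamma}}_{\I\cap\nu}=M^{\nu,\gamma,\rI}\otimes_{K}M^{\nu,\gamma,\rII}$, where $M^{\nu,\gamma,\rI}:=\sum_{\alpha}a^{\nu,\gamma,\rI}_\alpha\mat{\phi^{c_1,\alpha}}_{\I\cap c_1}$ (likewise for $\rII$) and $\otimes_K$ is the Kronecker product, whose rank is the product of the factor ranks. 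For each node I would track the subspace $W_\nu$ defined as the span of the column spaces of the matrices $\mat{\phi^{\nu,\gamma}}_{\I\cap\nu}$, $\gamma\in[r]$.

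For the upper bound I would show $\dim W_\nu\le \dim W_{c_1}\cdot\dim W_{c_2}$: each $M^{\nu,\gamma,\rI}$ has columns in $W_{c_1}$ and each $M^{\nu,\gamma,\rII}$ in $W_{c_2}$, so the column space of their Kronecker product lies in $W_{c_1}\otimes W_{c_2}$. The base cases are $\dim W_\nu\le r$ when $\nu\subseteq\I$ (then $\mat{\phi^{\nu,\gamma}}_{\I\cap\nu}$ is a single column, and there are $r$ of them) and $\dim W_\nu\le 1$ when $\nu\subseteq\I^c$ (then it is a single row). Unrolling the recursion down to the frontier of nodes that first become $\I$- or $\I^c$-pure, which is exactly the disjoint union $\Theta(\I;T)\cup\Theta(\I^c;T)$, yields $\dim W_{[N]}\le r^{\abs{\Theta(\I;T)}}$; since $\mathrm{rank}\,\mat{\A^y}_\I\le\dim W_{[N]}$, this bounds the matricization rank. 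Repeating with the roles of $\I$ and $\I^c$ swapped (equivalently transposing, as $\mat{\A^y}_{\I^c}=(\mat{\A^y}_\I)^{\top}$) gives $r^{\abs{\Theta(\I^c;T)}}$, and the minimum of the two is the claimed bound.

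For the lower bound, note that the entries of $\mat{\A^y}_\I$ are polynomials in the weights (with the fixed discretizers as constants), so the locus where all $r^{S}\times r^{S}$ minors vanish is either everything or a set of measure zero; hence it suffices to exhibit \emph{one} weight assignment, valid for the given discretizers, with $\mathrm{rank}\,\mat{\A^y}_\I\ge r^{S}$, where $S$ is the number of qualifying sibling pairs. I would run an induction producing weights for which a single generic combination $\sum_\gamma c_\gamma \mat{\phi^{\nu,\gamma}}_{\I\cap\nu}$ attains rank at least $r^{s(\nu)}$, with $s(\nu)$ counting qualifying pairs whose common parent is contained in $\nu$. At an impure node that is \emph{not} the parent of a qualifying pair, keep only one term ($\gamma{=}1$): then $\mat{\phi^{\nu,1}}_{\I\cap\nu}=M^{\nu,1,\rI}\otimes_K M^{\nu,1,\rII}$ is a single Kronecker product whose rank multiplies the inductively guaranteed ranks $r^{s(c_1)}$ and $r^{s(c_2)}$ of the two child combinations. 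At the parent $\nu$ of a qualifying pair $(\nu_1,\nu_2)$ (with $\nu_1\subseteq\I$, $\nu_2\subseteq\I^c$ pure), each $\mat{\phi^{\nu,\gamma}}_{\I\cap\nu}$ is a rank-one outer product $\uu(\gamma)\vv(\gamma)^\top$, and a full combination $\sum_\gamma c_\gamma\uu(\gamma)\vv(\gamma)^\top$ has rank $r$ once the $\uu(\gamma)$ and the $\vv(\gamma)$ are each linearly independent, arranged using that the discretizers span $\R^r$. Feeding these up to the root, where $\mat{\A^y}_\I=M^{[N],y,\rI}\otimes_K M^{[N],y,\rII}$ is itself a single combination of each child's tensors, yields rank at least $r^{s(c_1)+s(c_2)}=r^{S}$ (the root's own children form a depth-$1$ pair, excluded from $S$, matching that the root contributes no extra factor).

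The main obstacle is the lower-bound rank-$r$ boost at qualifying parents: I must show that a generic combination of the $r$ rank-one matrices $\uu(\gamma)\vv(\gamma)^\top$ genuinely has rank $r$, which reduces to a full-span sub-lemma, namely that with the discretizers spanning $\R^r$ and generic internal weights the tensors $\{\phi^{\mu,\gamma}\}_{\gamma\in[r]}$ at any pure node $\mu$ stay linearly independent as vectors (this is where $M^{\abs{\mu}}\ge r$ and genericity of products of independent vectors enter). I also need to verify that the single-combination rank is the correct quantity to propagate, so that a boost realized only when the parent combines its children composes multiplicatively with the Kronecker factors arising elsewhere, and to confirm that all these genericity conditions, being finitely many non-vanishing polynomial constraints on disjoint sets of weights, can be met simultaneously for the given discretizers before invoking the measure-zero argument.
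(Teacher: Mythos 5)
Your proposal is correct and takes essentially the same route as the paper's proof: matricization turns the tree decomposition into nested Kronecker products, the tilings $\Theta(\I;T)$ and $\Theta(\I^c;T)$ form the frontier that yields the upper bound $r^{\min\{\abs{\Theta(\I;T)},\abs{\Theta(\I^c;T)}\}}$, and the lower bound comes from a concrete weight construction (single-term selection at non-qualifying nodes, full rank-$r$ combinations at parents of qualifying sibling pairs) combined with rank multiplicativity of the Kronecker product and the polynomial measure-zero argument. The differences are organizational rather than substantive: you track column-space dimensions where the paper propagates permutation and tiling matrices out of the matricized decomposition to expose an inner $r^{\abs{\Theta(\I;T)}}$-by-$r^{\abs{\Theta(\I^c;T)}}$ factor, and for the lower bound you keep general discretizers and invoke a full-span sub-lemma, whereas the paper first reduces to $M=r$ with identity discretizers ($\vv^{(i)}=\e^{(i)}$), which resolves exactly the obstacle you flag~--~pure-node tensors become distinct indicator vectors, so their linear independence (and hence the rank-$r$ boost at qualifying parents) holds by inspection rather than by a genericity argument.
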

\begin{proof}[\emph{sketch~--~complete proof is given in app.~\ref{app:proofs:tree_decomp_ranks}}]
The proof proceeds in three stages.
In the first stage we matricize the tree decomposition of~$T$, \ie~transform it from a tensor decomposition generating~$\{\A^y\}_y$ to a matrix decomposition generating~$\{\mat{\A^y}_\I\}_y$.
In this transformation, instances of the tensor product~$\otimes$ convert to a \emph{Kronecker product}~$\odot$.
The second stage of the proof establishes the upper bound stated in the theorem, by showing that for each~$y$, $\mat{\A^y}_\I$~is equal to a product of matrices, one of which has size~$r^{\abs{\Theta(\I;T)}}$-by-$r^{\abs{\Theta(\I^c;T)}}$.
The key idea in this stage is the propagation of elements out of the matrix decomposition, using the relation $(AA')\odot(BB')=(A{\odot}A')(B{\odot}B')$.
The third and final stage of the proof establishes the lower bound stated in the theorem.
Here again, elements are propagated out of the matrix decomposition, allowing the construction of a concrete configuration of weights ($\{\aaa^{\nu,\gamma,\rI},\aaa^{\nu,\gamma,\rII}\}_{\nu,\gamma}$) for which the lower bound holds.
The fact that the lower bound holds almost always is then a direct corollary of app.~\ref{app:max_ranks}, where it is shown that the tree decomposition admits maximal matricization ranks almost always when~$g(\cdot)$ is the product operator.

The study of matricization ranks under hierarchical tensor decompositions is of significant interest, particularly in the context of deep learning.
\cite{\cactd}~proved the lower bound in the theorem for the specific case where~$T$ is the mode tree corresponding to the baseline dilated convolutional network (see fig.~\ref{fig:dilations_trees}(a)), and $\I=\{1,3,\ldots,N{-}1\}$.
The result was used to establish exponential expressive efficiency of deep convolutional arithmetic circuits \wrt~shallow ones.
\cite{\inductive}~later extended the analysis by deriving upper bounds for arbitrary index sets~$\I$, using them to study the ability of deep convolutional arithmetic circuits to model correlations among regions of their input.
The bounds used in~\cite{\inductive} were loose, and in fact trivial for many choices of index sets~$\I$.
We here treat arbitrary mode trees~$T$ and index sets~$\I$, proving upper and lower bounds that are tight, oftentimes exact.
Such tight bounds are necessary for identifying expressive efficiency that is not exponential, as we do in this paper.
The key to deriving the bounds is the aforementioned idea of propagating elements out of a matrix decomposition.
\end{proof}

As stated previously, given two binary mode trees over~$[N]$ (def.~\ref{def:tree})~--~$T$ and~$\Tbar$, with a corresponding collection of mixture nodes~$mix(T,\Tbar)$ (set of nodes interior to both~$T$ and~$\Tbar$), the bounds in theorem~\ref{theorem:tree_decomp_ranks} can be used to find an index set~$\I\subset[N]$ and a hybrid mode tree~$H$ (def.~\ref{def:hybrid_tree}), such that the tree decomposition (eq.~\ref{eq:tree_decomp}) of~$H$ generates grid tensors whose ranks under matricization \wrt~$\I$ are much higher than those brought forth by the tree decompositions of~$T$ and~$\Tbar$.
Consider our exemplar mode trees illustrated in fig.~\ref{fig:dilations_trees}.
Specifically, let~$T$ be the mode tree corresponding to the baseline dilated convolutional network (dilation~$2^{l-1}$ in layer~$l{\in}[L]{=}[\log_{2}N]$~--~see sec.~\ref{sec:dcn:base}), and let~$\Tbar$ be the mode tree corresponding to the network obtained by swapping dilations of even and odd layers (such that layer~$l$ has dilation~$2^{l-2}$ if $l$~is even, and~$2^l$ if $l$~is odd).
As described in sec.~\ref{sec:dcn:tree}, $T$~is a perfect binary tree whose depth-$l$ nodes, $l\in\{0,1,\ldots,L\}$, are $(k-1)N/2^l+[N/2^l]$ for $k\in[2^l]$.
$\Tbar$~is also perfect and has the same even-depth nodes, but its odd-depth nodes differ~--~they are generated by splitting parents into children holding non-contiguous quadrants.
Suppose we choose~$mix(T,\Tbar)$ to include the set of nodes in~$T$ and~$\Tbar$ whose depth is~$L{-}2$, and consider the hybrid mode tree~$H$ formed by taking the segments (see def.~\ref{def:hybrid_tree}) of the first half of these nodes from~$T$, and the rest of the tree from~$\Tbar$.
An illustration of~$T$,~$\Tbar$ and~$H$ in this setting, for the case~$L=4$, is given in fig.~\ref{fig:trees_tilings}.
Now, let the index set~$\I$ consist of every second index in~$[N/2]$, and every second pair of indexes in~$N/2+[N/2]$, \ie~$\I:=\{2k-1:k\in[N/4]\}\cup\{N/2+4k-k':k\in[N/8],k'=2,3\}$.
As illustrated in fig.~\ref{fig:trees_tilings}, the mode tree~$T$ tiles (see def.~\ref{def:tiling}) the lower half of~$\I$ into singletons, and its upper half into pairs.
The same applies to $T$'s~tiling of $\I$'s~complement $\I^c:=[N]\setminus\I$.
Moreover, for every node in the former tiling~$\Theta(\I;T)$, there exists a sibling in the latter~$\Theta(\I^c;T)$ (and vice versa).
By theorem~\ref{theorem:tree_decomp_ranks}, this implies that the tree decomposition of~$T$ generates grid tensors whose matricizations \wrt~$\I$ have rank~$r^{N/4+N/8}$.
A similar situation occurs with the mode tree~$\Tbar$, under which~$\I$ and~$\I^c$ are tiled into pairs in their lower halves and into singletons in their top halves (see illustration in fig.~\ref{fig:trees_tilings}).
This also leads to matricized grid tensors of rank~$r^{N/4+N/8}$.
On the other hand, the hybrid mode tree~$H$ tiles~$\I$ and~$\I^c$ entirely into singletons (see illustration in fig.~\ref{fig:trees_tilings}), leading (by theorem~\ref{theorem:tree_decomp_ranks}) to grid tensor matricization ranks of~$r^{N/2}$.
This means that if we were to replicate grid tensors generated by the tree decomposition of~$H$ using those of~$T$ or~$\Tbar$ (or a summation thereof), we would need to increase the size constant~$r$ super-linearly~--~by a power of~$4/3$ (at least).

\begin{figure*}
\includegraphics[width=\textwidth]{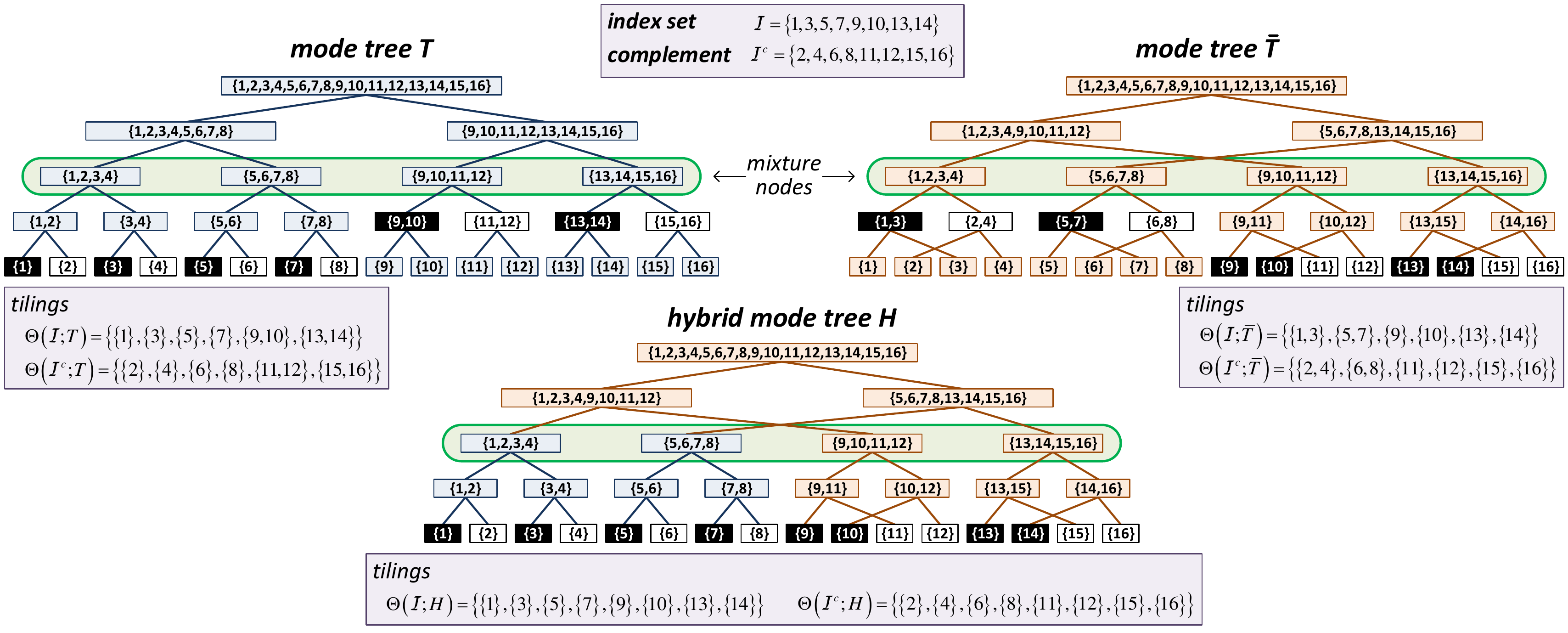}
\vspace{-7mm}
\caption{
Best viewed in color.
Two mode trees~$T$ and~$\Tbar$ with a possible choice of mixture nodes (same as in fig.~\ref{fig:mix_trees_dcn}(a) and~\ref{fig:hybrid_trees}(a)), along with a particular formed hybrid tree~$H$.
An index set~$\I$ and its complement~$\I^c$ are tiled into more pieces by~$H$ than they are by~$T$ and~$\Tbar$, leading the former to generate grid tensors with higher matricization ranks (theorem~\ref{theorem:tree_decomp_ranks}).
}
\label{fig:trees_tilings}
\vspace{-5mm}
\end{figure*}

The above example can be generalized, by considering swapping the dilations of more than two layers at once.
In particular, if $T$~is the mode tree corresponding to the baseline dilated convolutional network (dilation~$2^{l-1}$ in layer~$l$), $\Tbar$~is the mode tree corresponding to the network obtained by swapping dilations of groups of~$k$ layers (dilation~$2^{\lceil{l/k}\rceil{\cdot}k-1-((l-1){\text{~mod~}}k)}$ in layer~$l$), and the set of mixture nodes includes all nodes of depth~$L{-}k$, a hybrid mode tree~$H$ and an index set~$\I$ can be found, such that the tree decomposition of~$H$ generates grid tensors whose ranks when matricized \wrt~$\I$ can only be matched by the tree decompositions of~$T$ and~$\Tbar$ if their size constant~$r$ is increased by a power of~$2/(1+2^{1-k})$.
Since the mixed decomposition of~$T$ and~$\Tbar$ (eq.~\ref{eq:mix_decomp}) can realize the tree decomposition of~$H$ with double the size constant (claim~\ref{claim:hybrid_tree_by_mix}), we conclude that it can, with size constant~$2r$, generate grid tensors whose matricization ranks (\wrt~$\I$) require the tree decompositions of~$T$ and~$\Tbar$ to have size constant~$r^{2/(1+2^{1-k})}$~--~super-linearly larger.
Therefore, in this particular setting, prop.~\ref{prop:mix_by_tree} holds and the mixed decomposition of~$T$ and~$\Tbar$ is indeed expressively efficient \wrt~their tree decompositions.
Taking into account the fact that the mixed decomposition admits maximal matricization ranks almost always when~$g(\cdot)$ is the product operator (see app.~\ref{app:max_ranks}), we formalize the result in network terms:
\begin{corollary}
\label{corollary:mix_by_tree}
Let~$\NN$ be the baseline dilated convolutional network (dilation~$2^{l-1}$ in layer~$l$~--~see sec.~\ref{sec:dcn:base}), and let~$\NNbar$ be the network obtained by swapping dilations of groups of~$k$ layers (dilation~$2^{\lceil{l/k}\rceil{\cdot}k-1-((l-1){\text{~mod~}}k)}$ in layer~$l$).
Denote by~$\M$ the mixed dilated convolutional network obtained by summing the outputs of~$\NN$ and~$\NNbar$, while interconnecting their~$k$'th intermediate layer (and possibly additional layers).
Assume the networks' convolutional operator~$g(\cdot)$ is a product.
Then, besides a negligible set, all functions realized by~$\M$ with~$r$ channels in the layers of each interconnected network, cannot be realized by~$\NN$ or~$\NNbar$ (or a summation thereof) if the number of channels in each layer is less than~$(r/2)^{2/(1+2^{1-k})}$.
\end{corollary}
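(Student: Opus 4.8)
The plan is to reduce the network statement to the tensor-decomposition machinery already developed, and then to instantiate Theorem~\ref{theorem:tree_decomp_ranks} on a carefully chosen index set. First I would invoke the correspondence of sec.~\ref{sec:dcn}--\ref{sec:mtd}: the functions realized by~$\NN$,~$\NNbar$ and~$\M$ are discretized into grid tensors generated, respectively, by the tree decompositions (eq.~\ref{eq:tree_decomp}) of the mode trees~$T$ and~$\Tbar$ named in the statement and by their mixed decomposition (eq.~\ref{eq:mix_decomp}), where in each case the number of channels per layer equals the size constant~$r$ and the product operator $g(a,b)=a\cdot b$ makes every product an ordinary tensor product. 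I would take $mix(T,\Tbar)$ to consist of all depth-$(L{-}k)$ nodes (these are interior to both trees, which agree at depths that are multiples of~$k$), so that interconnecting the $k$'th intermediate layer is exactly this choice, and I would fix the hybrid tree~$H$ and index set~$\I$ by the direct generalization of the construction preceding the corollary: $H$ takes a $T$-segment on the first half of the depth-$(L{-}k)$ nodes and a $\Tbar$-segment on the rest, while~$\I$ alternates singletons in the bottom half of~$[N]$ with blocks in the top half so that~$H$ tiles both~$\I$ and~$\I^c$ entirely into singletons.

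The core computation is to feed~$\I$ into Theorem~\ref{theorem:tree_decomp_ranks}. For~$T$ (and symmetrically~$\Tbar$) with size constant~$s$, I would count the tilings~$\Theta(\I;T)$ and~$\Theta(\I^c;T)$ (def.~\ref{def:tiling}) and show $\min\{\abs{\Theta(\I;T)},\abs{\Theta(\I^c;T)}\}=e$, where $e:=(N/2)\cdot(1+2^{1-k})/2$, giving the upper bound $\mathrm{rank}\,\mat{\A^y}_\I\le s^{e}$. For~$H$ with size constant~$\rho$, the singleton tilings give $\abs{\Theta(\I;H)}=\abs{\Theta(\I^c;H)}=N/2$, and I would verify that the singletons pair up as siblings of depth~$(>1)$, so that the theorem's lower bound matches its upper bound and yields rank exactly~$\rho^{N/2}$ for almost all weights. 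I expect this combinatorial bookkeeping of the tilings, together with checking the sibling condition that activates the lower bound, to be the main obstacle: the remaining steps are mechanical, but the counts must come out to precisely these exponents for the threshold~$2/(1+2^{1-k})$ to emerge.

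Finally I would assemble the pieces. By Claim~\ref{claim:hybrid_tree_by_mix}, the mixed decomposition with size constant~$r$ reproduces the tree decomposition of~$H$ with size constant~$\rho=r/2$; hence some weight setting of~$\M$ attains matricization rank~$(r/2)^{N/2}$, so the maximal rank over~$\M$'s weights is at least this value, and by app.~\ref{app:max_ranks} (maximal ranks almost always under the product operator) this rank is attained for all weight settings of~$\M$ outside a Lebesgue-null set~--~the ``negligible set'' of the statement. Conversely, a function realized by~$\NN$ or~$\NNbar$ with~$s$ channels has $\mat{\cdot}_\I$-rank at most~$s^{e}$, and a summation of the two has rank at most~$2s^{e}$ by subadditivity of matrix rank; matching~$(r/2)^{N/2}$ thus forces $s\ge(r/2)^{2/(1+2^{1-k})}$, the spurious factor~$2$ contributing only $2^{-1/e}\to1$ and being absorbed (consistent with the super-linear framing). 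Therefore, whenever the channel count is below~$(r/2)^{2/(1+2^{1-k})}$ no such network reproduces the function, which is exactly the corollary.
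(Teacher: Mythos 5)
Your proposal is correct and follows essentially the same route as the paper's own derivation: the same reduction to grid tensors, the same choice of mixture nodes (depth $L{-}k$), hybrid tree~$H$ and index set~$\I$, the same invocation of Theorem~\ref{theorem:tree_decomp_ranks} (upper bound $s^{(N/4)(1+2^{1-k})}$ for~$T$ and~$\Tbar$, matching upper/lower bounds $\rho^{N/2}$ for~$H$ via the sibling condition), and the same transfer to networks through Claim~\ref{claim:hybrid_tree_by_mix} and the almost-always maximality of app.~\ref{app:max_ranks}. Your explicit handling of the summation case by rank subadditivity, with the asymptotically negligible factor~$2^{-1/e}$, is if anything more careful than the paper, which absorbs that constant without comment.
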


Corollary~\ref{corollary:mix_by_tree} (along with claim~\ref{claim:hybrid_tree_by_mix}) demonstrates that interconnecting intermediate layers of different dilated convolutional networks can bring forth expressive efficiency.
That is to say, through cross-connections between networks, we are able to represent functions that would otherwise be expensive, or even impractical to implement.
The lower bound in corollary~\ref{corollary:mix_by_tree}~--~$(r/2)^{2/(1+2^{1-k})}$, is essentially quadratic when~$k\geq4$.
For example, if~$k=4$ and the number of channels~$r$ in each interconnected network is~$128$, the lower bound would imply that in order to maintain representational abilities with an individual network (or a summation of the networks), over~$1500$ channels in each layer are required~--~far beyond acceptable practice in deep learning.

% EXPERIMENT
\section{Experiment} \label{sec:exp}

To assess the practical implications of the expressive efficiency brought forth by mixing dilated convolutional networks, a simple experiment was conducted.
We trained a baseline dilated convolutional network~$\NN$ (dilation~$2^{l-1}$ in layer~$l\in[L]$~--~see sec.~\ref{sec:dcn:base}) with architectural parameters similar to those used in WaveNet (\cite{van2016wavenet}), to classify individual phonemes in the TIMIT acoustic speech corpus (\cite{garofolo1993darpa}).
In addition to the baseline model, we also trained a companion network~$\NNbar$ obtained by swapping dilations of even and odd layers (such that layer~$l$ has dilation~$2^{l-2}$ if $l$~is even, and~$2^l$ if $l$~is odd).
As discussed in sec.~\ref{sec:mtd}, the mode trees corresponding to these networks (illustrated in fig.~\ref{fig:dilations_trees})~--~$T$ and~$\Tbar$, share interior nodes of even depth, thus any subset of those nodes may serve as mixture nodes for a mixed decomposition (eq.~\ref{eq:mix_decomp}).
We evaluate mixed dilated convolutional networks~$\M$ corresponding to different choices of mixture nodes (see fig.~\ref{fig:mix_trees_dcn} for illustration of a particular case).
Specifically, we consider choices of the following form:$$mix(T,\Tbar):=\{\nu\in{int(T)}{\cap}int(\Tbar):\text{depth of~$\nu$ (in~$T$ and~$\Tbar$)~$\geq$~threshold}\}$$
Varying the threshold yields mixed networks with a varying number of interconnections.
In the extreme case~$mix(T,\Tbar)=\emptyset$ (high threshold), $\M$~simply sums the outputs of~$\NN$ and~$\NNbar$.
As the threshold decreases interconnections between hidden layers are added~--~starting from hidden layer~$2$, then including hidden layer~$4$, and so on.
The intuition from our analysis (sec.~\ref{sec:analysis}) is that additional interconnections result in a larger ensemble of hybrid mode trees, which in turn boosts the expressive power of the mixed network~$\M$.
As fig.~\ref{fig:exp} shows, this intuition indeed complies with the results in practice~--~classification accuracy improves as we increase the number of interconnections, without any additional cost in terms of computation or model capacity.\endnote{
We note that in addition to the mixed dilated convolutional network~$\M$, we also evaluated the individual networks~$\NN$ and~$\NNbar$~--~both reached accuracies comparable to~$\M$ in the case of zero interconnections (output summation only).
}

It is important to stress that our objective in the experiment was to evaluate, in the most controlled setting possible, the exact models covered by our analysis.
We did not compare to state of the art results, as all phoneme recognition rates reported in the literature deviate from our basic setting~--~they heavily rely on data pre-processing (\eg~Mel-Frequency Cepstral Coefficients), prediction post-processing (\eg~Conditional
Random Fields), or both.
The recent DeepLab model (\cite{chen2016deeplab}) has demonstrated that when combined with other techniques, mixing dilated convolutions can lead to state of the art image segmentation performance.
We are currently pursuing similar results in the context of sequence processing tasks.

\begin{figure*}
\includegraphics[width=\textwidth]{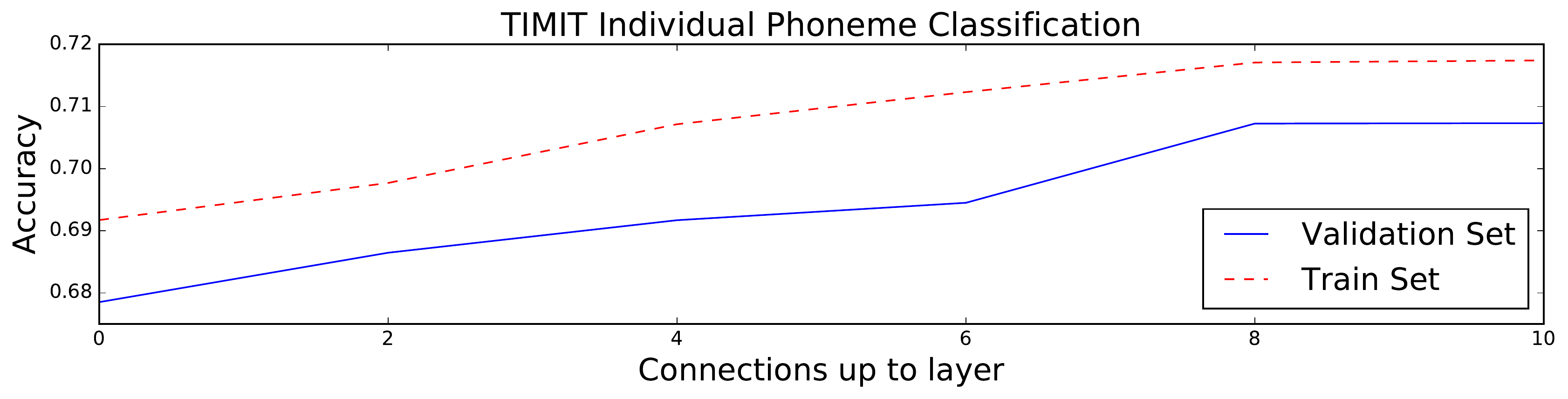}
\vspace{-8mm}
\caption{
Experimental results~--~increasing the number of interconnections between hidden layers of different dilated convolutional networks improves accuracy, with no additional cost in computation or model capacity.
}
\label{fig:exp}
\vspace{-4mm}
\end{figure*}

\medskip

To conclude this section, we briefly convey implementation details behind the experiment.
TIMIT dataset is an acoustic-phonetic corpus comprising~$6300$ sentences manually labeled at the phoneme level.
We split the data into train and validation sets in accordance with~\cite{halberstadt1998heterogeneous}, and as advised by~\cite{lee1989speaker}, mapped the~$61$ possible phoneme labels into~$39$ plus an additional ``garbage'' label.
The task was then to classify individual phonemes into one of the latter categories.
In accordance with WaveNet, the baseline dilated convolutional network had ReLU activation ($g(a,b){=}\max\{a{+}b,0\}$~--~see sec.~\ref{sec:dcn:base}),\endnote{
The case of convolutional arithmetic circuits ($g(a,b){=}a{\cdot}b$) was also evaluated, leading to the exact same trends as those observed with ReLU (fig.~\ref{fig:exp}).
}
$32$~channels per layer, and input vectors of dimension~$256$ holding one-hot quantizations of the audio signal.
The number of layers~$L$ was set to~$12$, corresponding to an input window of~$N{=}2^L{=}4096$ samples, spanning~$250$ms of audio signal~--~standard practice with TIMIT dataset.
The framework chosen for running the experiment was Caffe toolbox (\cite{jia2014caffe}), and we used Adam optimizer (\cite{kingma2014adam}) for training (with default hyper-parameters: moment decay rates~$\beta_{1}=0.9,\beta_{2}=0.999$; learning rate~$\alpha=0.001$).
Weight decay and batch size were set to~$10^{-5}$ and~$128$ respectively.
Models were trained for~$35000$ iterations, with learning rate decreased by a factor of~$10$ after~$80\%$ of iterations took place.

% CONCLUSION
\section{Conclusion} \label{sec:conclusion}

Nearly all state of the art deep networks these days (\eg~\cite{Szegedy:2014tb,he2015deep,huang2016deep,huang2016densely}) deviate from the simple feed-forward (chain) approach, employing various connectivity schemes between their layers.
In this paper we studied the representational implications of connectivity in the context of dilated convolutional networks, a family of deep models delivering state of the art performance in audio and text processing tasks, underlying Google's WaveNet (\cite{van2016wavenet}) and ByteNet (\cite{kalchbrenner2016neural}).
We formulated our study through the notion of expressive efficiency, which refers to a situation where one network must grow unfeasibly large to realize (or approximate) functions of another.
Our analysis shows that interconnecting hidden layers of different dilated convolutional networks can bring forth a model that is expressively efficient \wrt~the individual networks it comprises.
In particular, we show that a single connection between hidden layers can already lead to an almost quadratic gap, which in large-scale settings typically makes the difference between a model that is practical and one that is not.
We empirically evaluate the analyzed networks, and find that the expressive efficiency brought forth by interconnectivity coincides with improved accuracies.

To date, formal analyses studying expressive efficiency have focused on the architectural feature of depth, showing instances where deep networks are expressively efficient \wrt~shallow ones.
These studies were motivated by the vast empirical evidence supporting the importance of depth.
Our work thus provides a second exemplar of an architectural feature for which expressive efficiency and superior accuracies go hand in hand.
This leads us to believe that expressive efficiency may serve a key role in the development of new tools for deep network design.

% ENDNOTES
\theendnotes

% ACKNOWLEDGMENTS
\newcommand{\acknowledgments}
{This work was supported by Intel grant ICRI-CI \#9-2012-6133, by ISF Center grant 1790/12, and by the European Research Council (TheoryDL project).  
Nadav Cohen was supported by a Google Doctoral Fellowship in Machine Learning.}
\ifdefined\COLT
	\acks{\acknowledgments}
\else
	\ifdefined\CAMREADY
		\subsubsection*{Acknowledgments}
		\acknowledgments
	\fi
\fi

% REFERENCES
\section*{References}
{\small
\bibliographystyle{plainnat}
\bibliography{refs.bib}
}

% APPENDIXES
\clearpage
\appendix

% DERIVATION OF THE BASELINE DECOMPOSITION
\section{Derivation of the Baseline Decomposition} \label{app:base_decomp}

In this appendix we derive the baseline decomposition (eq.~\ref{eq:base_decomp})~--~a parameterization of grid tensors (eq.~\ref{eq:grid_tensor}) discretizing input-output mappings of the baseline dilated convolutional network (fig.~\ref{fig:base_dcn}).
As discussed in sec.~\ref{sec:dcn:base}, $\oo[t]$~--~the network output at time~$t$, is a function of $\x[t\text{-}N\text{+}1]\ldots\x[t]$~--~its input over the last $N:=2^L$ time points.
We would like to show that for any $d_1{\ldots}d_N\in[M]$, entry $(d_1,\ldots,d_N)$ of a tensor~$\A^y$ generated by eq.~\ref{eq:base_decomp}, is equal to coordinate~$y$ of network output~$\oo[t]$ under the following input assignment: $\x[t\text{-}N\text{+}1]=\vv^{(d_1)},\ldots,\x[t]=\vv^{(d_N)}$.
To achieve this, we prove by induction that under the latter assignment, for every $l\in[L]\cup\{0\}$, $j\in[N/2^l]$ and~$\gamma\in[r_l]$, coordinate~$\gamma$ of the network's depth-$l$ sequence (input~$(\x[t])_t$ for~$l=0$; hidden sequence~$(\h^{(l)}[t])_t$ for~$l\in[L-1]$; output~$(\oo[t])_t$ for~$l=L$) at time~$t-N+j{\cdot}2^l$, is equal to entry~$(d_{(j-1)2^l+1},\ldots,d_{(j-1)2^l+2^l})$ of the tensor~$\phi^{l,j,\gamma}$ in the baseline decomposition (eq.~\ref{eq:base_decomp}).
The desired result then follows from the case~$l=L,j=1,\gamma=y$.

When~$l=0$, the inductive hypothesis is trivial~--~coordinate~$\gamma$ of the input sequence at time~$t-N+j$, \ie~$x[t-N+j]_\gamma$, is by definition of our assignment equal to~$v_\gamma^{(d_j)}$~--~entry~$d_j$ of the tensor~$\phi^{0,j,\gamma}$ (see eq.~\ref{eq:base_decomp}).
Assume now that the inductive hypothesis holds whenever~$l=k$, and consider the tensor~$\phi^{k+1,j,\gamma}$ for some~$j\in[N/2^{k+1}]$ and~$\gamma\in[r_{k+1}]$.
From the baseline decomposition (eq.~\ref{eq:base_decomp}):
$$\phi^{k+1,j,\gamma} = \left(\sum\nolimits_{\alpha=1}^{r_k} a_\alpha^{k+1,\gamma,\rI}\cdot\phi^{k,2j-1,\alpha}\right) \otimesg \left(\sum\nolimits_{\alpha=1}^{r_k} a_\alpha^{k+1,\gamma,\rII}\cdot\phi^{k,2j,\alpha}\right)$$
Focusing on entry~$(d_{(j-1)2^{k+1}+1},\ldots,d_{(j-1)2^{k+1}+2^{k+1}})$ of the left-hand side, while recalling the definition of the generalized tensor product~$\otimesg$ (sec.~\ref{sec:prelim}), we may write:
\bea
&\phi^{k+1,j,\gamma}_{d_{(j-1)2^{k+1}+1},\ldots,d_{(j-1)2^{k+1}+2^{k+1}}} =&
\nonumber\\
&g\left(\sum\nolimits_{\alpha=1}^{r_k} a_\alpha^{k+1,\gamma,\rI}\cdot\phi^{k,2j-1,\alpha}_{d_{(2j-2)2^k+1},\ldots,d_{(2j-2)2^k+2^k}},\sum\nolimits_{\alpha=1}^{r_k} a_\alpha^{k+1,\gamma,\rII}\cdot\phi^{k,2j,\alpha}_{d_{(2j-1)2^k+1},\ldots,d_{(2j-1)2^k+2^k}}\right)&
\label{eq:base_decomp_entry}
\eea
By our inductive assumption:
\beas
\phi^{k,2j-1,\alpha}_{d_{(2j-2)2^k+1},\ldots,d_{(2j-2)2^k+2^k}} &=& h^{(k)}[t-N+(2j-1){\cdot}2^k]_\alpha
\quad\forall\alpha\in[r_k]
\\
\phi^{k,2j,\alpha}_{d_{(2j-1)2^k+1},\ldots,d_{(2j-1)2^k+2^k}} &=& h^{(k)}[t-N+2j{\cdot}2^k]_\alpha
\qquad\quad~~\forall\alpha\in[r_k]
\eeas
where we overload notation in the case~$k=0$, letting~$(\h^{(0)}[t])_t$ stand for the input sequence~$(\x[t])_t$.
Plugging the latter into eq.~\ref{eq:base_decomp_entry}, we obtain:
\beas
&\phi^{k+1,j,\gamma}_{d_{(j-1)2^{k+1}+1},\ldots,d_{(j-1)2^{k+1}+2^{k+1}}} =&
\\
&g\left(
\inprod{\aaa^{k+1,\gamma,\rI}}{\h^{(k)}[t-N+(2j-1){\cdot}2^k]},
\inprod{\aaa^{k+1,\gamma,\rII}}{\h^{(k)}[t-N+2j{\cdot}2^k]}
\right)&
\eeas
By the definition of the baseline dilated convolutional network (sec.~\ref{sec:dcn:base}), the latter expression is precisely equal to coordinate~$\gamma$ of the sequence~$(\h^{(k+1)}[t])_t$ (or $(\oo[t])_t$~if~$k=L-1$) at time~$t-N+j{\cdot}2^{k+1}$.
This proves that our inductive hypothesis holds when~$l=k+1$, and in general.

% DEFERRED PROOFS
\section{Deferred Proofs} \label{app:proofs}

  % HYBRID_TREE_BY_MIX
\subsection{Proof of Claim~\ref{claim:hybrid_tree_by_mix}} \label{app:proofs:hybrid_tree_by_mix}

We initiate the proof by introducing notations that will allow a more compact presentation.
Hereinafter, we let~$\{\aaa^{H,\nu,\gamma,\rI},\aaa^{H,\nu,\gamma,\rII}\in\R^{r_{mix}/2}\}_{\nu{\in}int(H),\gamma\in[r_{mix}/2]}$ stand for the weights in the tree decomposition of the hybrid mode tree~$H$ (eq.~\ref{eq:tree_decomp} with size constant~$r=r_{mix}/2$ and underlying mode tree given by def.~\ref{def:hybrid_tree}).
Similarly, we use~$\{\aaa^{T,\nu,\gamma,\rI},\aaa^{T,\nu,\gamma,\rII}\in\R^{r_{mix}}\}_{\nu{\in}int(T),\gamma\in[r_{mix}]}$ and~$\{\aaa^{\Tbar,\nu,\gamma,\rI},\aaa^{\Tbar,\nu,\gamma,\rII}\in\R^{r_{mix}}\}_{\nu{\in}int(\Tbar),\gamma\in[r_{mix}]}$ to denote the weights, corresponding to~$T$ and~$\Tbar$ (respectively), in the mixed decomposition (eq.~\ref{eq:mix_decomp} with size constant~$r=r_{mix}$).
Recall that by construction (def.~\ref{def:hybrid_tree}), $int(H)$~--~the interior of~$H$, consists of different segments (collections of nodes), each taken from either~$int(T)$ or~$int(\Tbar)$.
We define~$t:int(H)\to\{T,\Tbar\}$ to be the function indicating which tree an interior node in~$H$ came from.
Specifically, if the node~$\nu{\in}int(H)$ originated from~$T$ we have~$t(\nu)=T$, and on the other hand, if its source is~$\Tbar$ then~$t(\nu)=\Tbar$.
By convention, feeding~$t(\cdot)$ with an argument outside~$int(H)$ yields something that is different from both~$T$ and~$\Tbar$.
For example, if~$\nu{\in}int(H)$ is the root node, \ie~$\nu=[N]$, then~$P(\nu;H)$~--~its parent in~$H$, is undefined and we have~$t(P(\nu;H)){\neq}t(\nu)$.
Similarly, if the child~$C_{\rI}(\nu;H)$ of~$\nu{\in}int(H)$ is a leaf, it is outside the domain of~$t(\cdot)$ and thus~$t(\nu){\neq}t(C_{\rI}(\nu;H))$.

Given a setting of weights~$\{\aaa^{H,\nu,\gamma,\rI},\aaa^{H,\nu,\gamma,\rII}\}_{\nu,\gamma}$ for the tree decomposition of~$H$, we would like to show that there exists a setting of weights~$\{\aaa^{T,\nu,\gamma,\rI},\aaa^{T,\nu,\gamma,\rII}\}_{\nu,\gamma}$ and~$\{\aaa^{\Tbar,\nu,\gamma,\rI},\aaa^{\Tbar,\nu,\gamma,\rII}\}_{\nu,\gamma}$ for the mixed decomposition of~$T$ and~$\Tbar$, such that the latter generates grid tensors identical to those of the former.
More precisely, for any collection of discretizers~$\{\vv^{(i)}\in\R^{r_{mix}/2}\}_{i\in[M]}$ fed into the tree decomposition of~$H$, leading the latter to produce grid tensors~$\{\A^y\}_{y\in[r_{mix}/2]}$, we would like the mixed decomposition to be such that when fed with the padded discretizers~$\{[(\vv^{(i)})^\top~\0]^\top\in\R^{r_{mix}}\}_{i\in[M]}$, the first~$r_{mix}/2$ grid tensors it generates are equal to~$\{\A^y\}_{y\in[r_{mix}/2]}$.
We prove existence of the sought after weight setting constructively, by presenting an explicit procedure for assigning~$\{\aaa^{T,\nu,\gamma,\rI},\aaa^{T,\nu,\gamma,\rII}\}_{\nu,\gamma}$ and~$\{\aaa^{\Tbar,\nu,\gamma,\rI},\aaa^{\Tbar,\nu,\gamma,\rII}\}_{\nu,\gamma}$ based on~$\{\aaa^{H,\nu,\gamma,\rI},\aaa^{H,\nu,\gamma,\rII}\}_{\nu,\gamma}$:
\bea
&&\text{Initialize:}
\nonumber\\[1mm]
&&\qquad\aaa^{T,\nu,\gamma,\rI}=\aaa^{T,\nu,\gamma,\rII}=\0
\quad\forall\nu{\in}int(T),\gamma\in[r_{mix}]
\nonumber\\[1mm]
&&\qquad\aaa^{\Tbar,\nu,\gamma,\rI}=\aaa^{\Tbar,\nu,\gamma,\rII}=\0
\quad\forall\nu{\in}int(\Tbar),\gamma\in[r_{mix}]
\nonumber\\[1mm]
&&\text{For $\nu$ in $int(H)$ (depth-first order):}
\nonumber\\[1mm]
&&\qquad\aaa^{t(\nu),\nu,\gamma+\frac{1}{2}r_{mix},\rI} = 
\left\{
\begin{array}{ll}
\left[\0^\top~(\aaa^{H,\nu,\gamma,\rI})^\top\right]^\top  & ,t(\nu)=t(C_{\rI}(\nu;H)) \\
\left[(\aaa^{H,\nu,\gamma,\rI})^\top~\0^\top\right]^\top  & ,t(\nu){\,\,\neq\,\,}t(C_{\rI}(\nu;H))
\end{array}
\right.
\quad\forall\gamma\in[r_{mix}/2]
\nonumber\\[1mm]
&&\qquad\aaa^{t(\nu),\nu,\gamma+\frac{1}{2}r_{mix},\rII} = 
\left\{
\begin{array}{ll}
\left[\0^\top~(\aaa^{H,\nu,\gamma,\rII})^\top\right]^\top  & ,t(\nu)=t(C_{\rII}(\nu;H)) \\
\left[(\aaa^{H,\nu,\gamma,\rII})^\top~\0^\top\right]^\top  & ,t(\nu){\,\,\neq\,\,}t(C_{\rII}(\nu;H))
\end{array}
\right.
\quad\forall\gamma\in[r_{mix}/2]
\nonumber\\[1mm]
&&\qquad\text{If $t(P(\nu;H))~\neq~t(\nu)$}:
\nonumber\\[1mm]
&&\qquad\qquad\text{Swap~~$\aaa^{t(\nu),\nu,\gamma,\rI}~\longleftrightarrow\aaa^{t(\nu),\nu,\gamma+\frac{1}{2}r_{mix},\rI}
\quad\forall\gamma\in[r_{mix}/2]$}
\nonumber\\[1mm]
&&\qquad\qquad\text{Swap~~$\aaa^{t(\nu),\nu,\gamma,\rII}\longleftrightarrow\aaa^{t(\nu),\nu,\gamma+\frac{1}{2}r_{mix},\rII}
~~~\forall\gamma\in[r_{mix}/2]$}
\label{eq:hybrid_tree_by_mix_assignment}
\eea
The idea behind this assignment is as follows.
The computation corresponding to a node in the tree decomposition of~$H$, is carried out, in the mixed decomposition of~$T$ and~$\Tbar$, by the respective node in the respective source tree.
That is to say, the computation of~$\nu{\in}int(H)$ in the tree decomposition is carried out by~$\nu{\in}int(t(\nu))$ in the mixed decomposition.
$\nu{\in}int(t(\nu))$~uses half~($r_{mix}/2$) of its weight vectors, and in each used weight vector, half~($r_{mix}/2$) of the coordinates hold actual (non-zero) values~--~a copy of the respective weight from~$\nu{\in}int(H)$.
The choice of which weight vectors to use, and which coordinates to use in the active weight vectors, depends on the tree-transitioning scheme.
If the parent of~$\nu$ in~$H$ came from the same tree as~$\nu$, \ie~$t(P(\nu;H))=t(\nu)$, $\nu{\in}int(t(\nu))$~in the mixed decomposition uses weight vectors with higher indexes ($\gamma{\in}r_{mix}/2+[r_{mix}/2]$), as these relate to tensors that are not exchanged (see eq.~\ref{eq:mix_decomp}).
On the other hand, if~$t(P(\nu;H)){\neq}t(\nu)$, weight vectors with lower indexes ($\gamma\in[r_{mix}/2]$) are used, so that the computations (tensors) will be sent to the opposite tree.
The analogous rationale holds for the children of~$\nu$ in~$H$ ($C_{\rI}(\nu;H)$~and~$C_{\rII}(\nu;H)$).
If a child came from the same tree as~$\nu$, upper coordinates of the appropriate weight vectors are used, so that computations (tensors) coming from the present tree are collected.
On the other hand, if the child came from the opposite tree, lower coordinates are used and computations (tensors) from that tree are fetched.
Altogether, the assignment in eq.~\ref{eq:hybrid_tree_by_mix_assignment} meets our requirements, and thus concludes the proof.

\qed

  % TREE_DECOMP_RANKS
\subsection{Proof of Theorem~\ref{theorem:tree_decomp_ranks}} \label{app:proofs:tree_decomp_ranks}

Since we are dealing with a single particular mode tree~$T$, we omit it from our notations throughout the proof.
Specifically, we denote by~$C_\rI(\nu)$ and~$C_\rII(\nu)$ (instead of~$C_\rI(\nu;T)$ and~$C_\rII(\nu;T)$) the children of an interior node~$\nu{\in}int(T)$; by~$\Theta(\I)$ and~$\Theta(\I^c)$ (instead of~$\Theta(\I;T)$ and~$\Theta(\I^c;T)$) the tilings of~$\I$ and~$\I^c$ (respectively) \wrt~$T$ (see def.~\ref{def:tiling}); and by~$\sigma^{(\nu)}(\cdot)$ (instead of~$\sigma^{(\nu;T)}(\cdot)$) the permutation corresponding to~$\nu{\in}int(T)$ in the tree decomposition (eq.~\ref{eq:tree_decomp}).

\bigskip

The first stage of the proof is to derive a matricized form of the tree decomposition, shedding light into the manner in which grid tensor matricizations~$\{\mat{\A^y}_\I\}_y$ are generated.
As a preparatory step in this direction, we define the notion of an \emph{index set reduction}.
Let~$\nu\subset[N]$ be a node in~$T$, whose elements we denote by $i_1<\cdots<i_{\abs{\nu}}$.
The reduction of~$\I$ onto~$\nu$ is defined as follows:
\be
\reduce{\I}{\nu}:=\{j\in[\abs{\nu}]:i_j\in\I\cap\nu\}
\label{eq:reduction}
\ee
In words, it is the set of indexes corresponding to the intersection~$\I\cap\nu$ inside~$\nu$.
Besides index set reduction, an additional tool we will be using is the \emph{Kronecker product}~--~a matrix operator we denote by~$\odot$.
For two matrices $A\in\R^{M_1{\times}M_2}$ and $B\in\R^{N_1{\times}N_2}$, $A{\odot}B$ is the matrix in $\R^{M_{1}N_{1}{\times}M_{2}N_{2}}$ holding $A_{ij}B_{kl}$ in row index $(i-1)N_1+k$ and column index $(j-1)N_2+l$.

Consider the central relation in the tree decomposition (eq.~\ref{eq:tree_decomp}), while noticing that~$\otimesg\equiv\otimes$ in our setting ($g(\cdot)$~is the product operator~--~see sec.~\ref{sec:prelim}):
\be
\underbrace{\phi^{\nu,\gamma}}_{\text{order $2^{\abs{\nu}}$}} =
\sigma^{(\nu)}\left(\left(\sum\nolimits_{\alpha=1}^{r} a_\alpha^{\nu,\gamma,\rI}\cdot\phi^{C_\rI(\nu),\alpha}\right)
\otimes
\left(\sum\nolimits_{\alpha=1}^{r} a_\alpha^{\nu,\gamma,\rII}\cdot\phi^{C_\rII(\nu),\alpha}\right)\right)
\label{eq:tree_decomp_main}
\ee
Suppose we would like to matricize the tensor~$\phi^{\nu,\gamma}$ \wrt~the reduction~$\reduce{\I}{\nu}$.
If all elements of~$C_\rI(\nu)$ were smaller than those of~$C_\rII(\nu)$, the permutation~$\sigma^{(\nu)}(\cdot)$ would be the identity (see sec.~\ref{sec:dcn:tree}), and the following matrix relation would hold:
\beas
\mat{\phi^{\nu,\gamma}}_{\reduce{\I}{\nu}} 
&=&
\matflex{\left(\sum\nolimits_{\alpha=1}^{r} a_\alpha^{\nu,\gamma,\rI}\cdot\phi^{C_\rI(\nu),\alpha}\right)
\otimes
\left(\sum\nolimits_{\alpha=1}^{r} a_\alpha^{\nu,\gamma,\rII}\cdot\phi^{C_\rII(\nu),\alpha}\right)}_{\reduce{\I}{\nu}}
\\
&=&
\matflex{\sum\nolimits_{\alpha=1}^{r} a_\alpha^{\nu,\gamma,\rI}\cdot\phi^{C_\rI(\nu),\alpha}}_{\reduce{\I}{C_\rI(\nu)}}
\odot
\matflex{\sum\nolimits_{\alpha=1}^{r} a_\alpha^{\nu,\gamma,\rII}\cdot\phi^{C_\rII(\nu),\alpha}}_{\reduce{\I}{C_\rII(\nu)}}
\\
&=&
\left(\sum\nolimits_{\alpha=1}^{r} a_\alpha^{\nu,\gamma,\rI}\cdot\mat{\phi^{C_\rI(\nu),\alpha}}_{\reduce{\I}{C_\rI(\nu)}}\right)
\odot
\left(\sum\nolimits_{\alpha=1}^{r} a_\alpha^{\nu,\gamma,\rII}\cdot\mat{\phi^{C_\rII(\nu),\alpha}}_{\reduce{\I}{C_\rII(\nu)}}\right)
\eeas
In general however, elements in~$C_\rI(\nu)$ could be greater than ones in~$C_\rII(\nu)$, and so eq.~\ref{eq:tree_decomp_main} includes a tensor mode sorting via~$\sigma^{(\nu)}(\cdot)$.
In matricized form, this amounts to rearranging rows and columns through appropriate permutation matrices~$Q^{(\nu)}$ and~$\Qbar^{(\nu)}$ respectively:
$$
\mat{\phi^{\nu,\gamma}}_{\reduce{\I}{\nu}} = Q^{(\nu)}\left(\left(\sum\nolimits_{\alpha=1}^{r} a_\alpha^{\nu,\gamma,\rI}\cdot\mat{\phi^{C_\rI(\nu),\alpha}}_{\reduce{\I}{C_\rI(\nu)}}\right) \odot \left(\sum\nolimits_{\alpha=1}^{r} a_\alpha^{\nu,\gamma,\rII}\cdot\mat{\phi^{C_\rII(\nu),\alpha}}_{\reduce{\I}{C_\rII(\nu)}}\right)\right)\Qbar^{(\nu)}
$$
We thus arrive at the following matrix form of eq.~\ref{eq:tree_decomp}, referred to as the \emph{matricized tree decomposition}:
{\small
\bea
&&\text{For $j=1{\ldots}N$:}
\nonumber\\
&&\quad\mat{\phi^{\{j\},\gamma}}_{\reduce{\I}{\{j\}}} = \matflex{[v^{(1)}_\gamma,\ldots,v^{(M)}_\gamma]^\top}_{\reduce{\I}{\{j\}}}
~~\forall\gamma\in[r]
\nonumber\\[2mm]
&&\text{For $\nu$ in $int(T)$ (depth-first order):}
\nonumber\\
&&\quad\mat{\phi^{\nu,\gamma}}_{\reduce{\I}{\nu}} = Q^{(\nu)}\left(\left(\sum_{\alpha=1}^{r} a_\alpha^{\nu,\gamma,\rI}\mat{\phi^{C_\rI(\nu),\alpha}}_{\reduce{\I}{C_\rI(\nu)}}\right) \odot \left(\sum_{\alpha=1}^{r} a_\alpha^{\nu,\gamma,\rII}\mat{\phi^{C_\rII(\nu),\alpha}}_{\reduce{\I}{C_\rII(\nu)}}\right)\right)\Qbar^{(\nu)}
~~\forall\gamma\in[r]
\nonumber\\[2mm]
&&\mat{\A^y}_\I=\mat{\phi^{[N],y}}_{\reduce{\I}{[N]}}
~~\forall{y\in[r]}
\label{eq:mat_tree_decomp}
\eea
}

\bigskip

Next, we move on to the second stage of the proof, where we establish the upper bound stated in the theorem:
\be
rank\mat{\A^y}_\I~\leq~r^{\min\{\abs{\Theta(\I)},\abs{\Theta(\I^c)}\}}
\quad\forall{y}
\label{eq:tree_decomp_ranks_ub}
\ee
We begin by ``propagating outwards'' the permutation matrices~$Q^{([N])}$ and~$\Qbar^{([N])}$ corresponding to the root node~$[N]$ in the matricized tree decomposition (eq.~\ref{eq:mat_tree_decomp}).
Namely, for every~$\gamma\in[r]$, we replace the matrix~$\mat{\phi^{[N],\gamma}}_{\reduce{\I}{[N]}}$ by:
{\small
$$
B^{[N],\gamma} := \left(\sum_{\alpha=1}^{r} a_\alpha^{[N],\gamma,\rI}\mat{\phi^{C_\rI([N]),\alpha}}_{\reduce{\I}{C_\rI([N])}}\right) \odot \left(\sum_{\alpha=1}^{r} a_\alpha^{[N],\gamma,\rII}\mat{\phi^{C_\rII([N]),\alpha}}_{\reduce{\I}{C_\rII([N])}}\right)
$$
}
and accordingly move~$Q^{([N])}$ and~$\Qbar^{([N])}$ to the assignments of~$\{\mat{\A^y}_\I\}_y$.
This gives rise to the following decomposition:
{\small
\beas
&&\text{For $j=1{\ldots}N$:}
\\
&&\quad\mat{\phi^{\{j\},\gamma}}_{\reduce{\I}{\{j\}}} = \matflex{[v^{(1)}_\gamma,\ldots,v^{(M)}_\gamma]^\top}_{\reduce{\I}{\{j\}}}
~~\forall\gamma\in[r]
\\[2mm]
&&\text{For $\nu$ in $int(T)\setminus\{[N]\}$ (depth-first order):}
\\
&&\quad\mat{\phi^{\nu,\gamma}}_{\reduce{\I}{\nu}} = Q^{(\nu)}\left(\left(\sum_{\alpha=1}^{r} a_\alpha^{\nu,\gamma,\rI}\mat{\phi^{C_\rI(\nu),\alpha}}_{\reduce{\I}{C_\rI(\nu)}}\right) \odot \left(\sum_{\alpha=1}^{r} a_\alpha^{\nu,\gamma,\rII}\mat{\phi^{C_\rII(\nu),\alpha}}_{\reduce{\I}{C_\rII(\nu)}}\right)\right)\Qbar^{(\nu)}
~~\forall\gamma\in[r]
\\[2mm]
&&B^{[N],\gamma} = \left(\sum_{\alpha=1}^{r} a_\alpha^{[N],\gamma,\rI}\mat{\phi^{C_\rI([N]),\alpha}}_{\reduce{\I}{C_\rI([N])}}\right) \odot \left(\sum_{\alpha=1}^{r} a_\alpha^{[N],\gamma,\rII}\mat{\phi^{C_\rII([N]),\alpha}}_{\reduce{\I}{C_\rII([N])}}\right)
~~\forall\gamma\in[r]
\\[2mm]
&&\mat{\A^y}_\I=Q^{([N])}B^{[N],y}\Qbar^{([N])}
~~\forall{y\in[r]}
\eeas
}
Consider now~$C_\rI([N])$~--~a child of the root node~$[N]$, and suppose we would like to similarly propagate outwards its permutation matrices~$Q^{(C_\rI([N]))}$ and~$\Qbar^{(C_\rI([N]))}$.
We may define, for every~$\gamma{\in}[r]$:
{\small
$$
B^{C_\rI([N]),\gamma} := \left(\sum_{\alpha=1}^{r} a_\alpha^{C_\rI([N]),\gamma,\rI}\mat{\phi^{C_\rI(C_\rI([N])),\alpha}}_{\reduce{\I}{C_\rI(C_\rI([N]))}}\right) \odot \left(\sum_{\alpha=1}^{r} a_\alpha^{C_\rI([N]),\gamma,\rII}\mat{\phi^{C_\rII(C_\rI([N])),\alpha}}_{\reduce{\I}{C_\rII(C_\rI([N]))}}\right)
$$
}
which in turn implies:
{\small
\beas
B^{[N],\gamma} 
&=& \left(\sum_{\alpha=1}^{r} a_\alpha^{[N],\gamma,\rI}Q^{(C_\rI([N]))}B^{C_\rI([N]),\alpha}\Qbar^{(C_\rI([N]))}\right) \odot \left(\sum_{\alpha=1}^{r} a_\alpha^{[N],\gamma,\rII}\mat{\phi^{C_\rII([N]),\alpha}}_{\reduce{\I}{C_\rII([N])}}\right)
\\
&=& \left(Q^{(C_\rI([N]))}\left(\sum_{\alpha=1}^{r} a_\alpha^{[N],\gamma,\rI}B^{C_\rI([N]),\alpha}\right)\Qbar^{(C_\rI([N]))}\right) \odot \left(\sum_{\alpha=1}^{r} a_\alpha^{[N],\gamma,\rII}\mat{\phi^{C_\rII([N]),\alpha}}_{\reduce{\I}{C_\rII([N])}}\right)
\eeas
}
Now, for any matrices~$A,A',B,B'$ such that~$AA'$ and~$BB'$ are defined, the following equality holds:~$(AA')\odot(BB')=(A{\odot}A')(B{\odot}B')$ (see~\cite{bellman1970introduction} for proof).
We may therefore write:
{\small
\beas
&&B^{[N],\gamma} =
\\
&&\quad\left(Q^{(C_\rI([N]))}{\odot}I\right)
\left(\left(\sum_{\alpha=1}^{r} a_\alpha^{[N],\gamma,\rI}B^{C_\rI([N]),\alpha}\right) \odot \left(\sum_{\alpha=1}^{r} a_\alpha^{[N],\gamma,\rII}\mat{\phi^{C_\rII([N]),\alpha}}_{\reduce{\I}{C_\rII([N])}}\right)\right)
\left(\Qbar^{(C_\rI([N]))}{\odot}\Ibar\right)
\eeas
}
where~$I$ and~$\Ibar$ are identity matrices of appropriate sizes.
Propagating outwards the matrices $Q^{(C_\rI([N]))}{\odot}I$~and~$\Qbar^{(C_\rI([N]))}{\odot}\Ibar$ (while redefining~$B^{[N],\gamma}$ appropriately), we arrive at the following decomposition:
{\small
\beas
&&\text{For $j=1{\ldots}N$:}
\\
&&\qquad\mat{\phi^{\{j\},\gamma}}_{\reduce{\I}{\{j\}}} = \matflex{[v^{(1)}_\gamma,\ldots,v^{(M)}_\gamma]^\top}_{\reduce{\I}{\{j\}}}
~~\forall\gamma\in[r]
\\[2mm]
&&\text{For $\nu$ in $int(T)\setminus\{[N],C_\rI([N])\}$ (depth-first order):}
\\
&&\qquad\mat{\phi^{\nu,\gamma}}_{\reduce{\I}{\nu}} = Q^{(\nu)}\left(\left(\sum_{\alpha=1}^{r} a_\alpha^{\nu,\gamma,\rI}\mat{\phi^{C_\rI(\nu),\alpha}}_{\reduce{\I}{C_\rI(\nu)}}\right) \odot \left(\sum_{\alpha=1}^{r} a_\alpha^{\nu,\gamma,\rII}\mat{\phi^{C_\rII(\nu),\alpha}}_{\reduce{\I}{C_\rII(\nu)}}\right)\right)\Qbar^{(\nu)}
~~\forall\gamma\in[r]
\\[2mm]
&&B^{C_\rI([N]),\gamma} = \left(\sum_{\alpha=1}^{r} a_\alpha^{C_\rI([N]),\gamma,\rI}\mat{\phi^{C_\rI(C_\rI([N])),\alpha}}_{\reduce{\I}{C_\rI(C_\rI([N]))}}\right)  
\\
&&\qquad\qquad\qquad\qquad\qquad~ \odot \left(\sum_{\alpha=1}^{r} a_\alpha^{C_\rI([N]),\gamma,\rII}\mat{\phi^{C_\rII(C_\rI([N])),\alpha}}_{\reduce{\I}{C_\rII(C_\rI([N]))}}\right)
~~\forall\gamma\in[r]
\\[2mm]
&&B^{[N],\gamma} = \left(\sum_{\alpha=1}^{r} a_\alpha^{[N],\gamma,\rI}B^{C_\rI([N]),\alpha}\right) \odot \left(\sum_{\alpha=1}^{r} a_\alpha^{[N],\gamma,\rII}\mat{\phi^{C_\rII([N]),\alpha}}_{\reduce{\I}{C_\rII([N])}}\right)
~~\forall\gamma\in[r]
\\[2mm]
&&\mat{\A^y}_\I = \left(Q^{([N])}(Q^{(C_\rI([N]))}{\odot}I)\right)B^{[N],y}\left((\Qbar^{(C_\rI([N]))}{\odot}\Ibar)\Qbar^{([N])}\right)
~~\forall{y\in[r]}
\eeas
}
Continuing this process, we propagate outwards the permutation matrices~$Q^{(\nu)}$ and~$\Qbar^{(\nu)}$ of all nodes~$\nu$ in the tree that are not members of the tilings~$\Theta(\I)$ or~$\Theta(\I^c)$ (see def.~\ref{def:tiling}), and are not descendants of such.
This brings forth the following decomposition:
{\small
\beas
&&\text{For $j=1{\ldots}N$:}
\\
&&\quad\mat{\phi^{\{j\},\gamma}}_{\reduce{\I}{\{j\}}} = \matflex{[v^{(1)}_\gamma,\ldots,v^{(M)}_\gamma]^\top}_{\reduce{\I}{\{j\}}}
~~\forall\gamma\in[r]
\\[2mm]
&&\text{For $\nu$ in $int(T)\cap$\{nodes in~$\Theta(\I)$ or~$\Theta(\I^c)$ or descendants of such\} (depth-first order):}
\\
&&\quad\mat{\phi^{\nu,\gamma}}_{\reduce{\I}{\nu}} = Q^{(\nu)}\left(\left(\sum_{\alpha=1}^{r} a_\alpha^{\nu,\gamma,\rI}\mat{\phi^{C_\rI(\nu),\alpha}}_{\reduce{\I}{C_\rI(\nu)}}\right) \odot \left(\sum_{\alpha=1}^{r} a_\alpha^{\nu,\gamma,\rII}\mat{\phi^{C_\rII(\nu),\alpha}}_{\reduce{\I}{C_\rII(\nu)}}\right)\right)\Qbar^{(\nu)}
~~\forall\gamma\in[r]
\\[2mm]
&&\text{For $\nu$ in $\Theta(\I)\cup\Theta(\I^c)$:}
\\
&&{\quad}B^{\nu,\gamma} = \mat{\phi^{\nu,\gamma}}_{\reduce{\I}{\nu}}
~~\forall\gamma\in[r]
\\[2mm]
&&\text{For $\nu$ in $int(T)\setminus$\{nodes in~$\Theta(\I)$ or~$\Theta(\I^c)$ or descendants of such\} (depth-first order):}
\\
&&{\quad}B^{\nu,\gamma} = \left(\sum_{\alpha=1}^{r} a_\alpha^{\nu,\gamma,\rI}B^{C_\rI(\nu),\alpha}\right) \odot \left(\sum_{\alpha=1}^{r} a_\alpha^{\nu,\gamma,\rII}B^{C_\rII(\nu),\alpha}\right)
~~\forall\gamma\in[r]
\\[2mm]
&&\mat{\A^y}_\I=A{\cdot}B^{[N],y}{\cdot}\Abar
~~\forall{y\in[r]}
\text{,~~for appropriate matrices~$A$ and~$\Abar$}
\eeas
}
Consider now a node~$\nu{\in}int(T)$ whose child belongs to a tiling~--~without loss of generality~$C_\rI(\nu)$ belongs to~$\Theta(\I)$.
Notice that in this case~$B^{C_\rI(\nu),\alpha}$ is a column vector for every~$\alpha\in[r]$.
We may thus define~$B^{C_\rI(\nu)}$ to be the matrix whose~$\alpha$'th column is~$B^{C_\rI(\nu),\alpha}$, and get the following equalities:
{\small
$$
B^{\nu,\gamma}
= \left(B^{C_\rI(\nu)}\aaa^{\nu,\gamma,\rI}\right) \odot \left(\sum\nolimits_{\alpha=1}^{r} a_\alpha^{\nu,\gamma,\rII}B^{C_\rII(\nu),\alpha}\right)
= \left(B^{C_\rI(\nu)}{\odot}I\right) \left(\aaa^{\nu,\gamma,\rI} \odot \sum\nolimits_{\alpha=1}^{r} a_\alpha^{\nu,\gamma,\rII}B^{C_\rII(\nu),\alpha}\right)
$$
}
where again, $I$~is an appropriately sized identity matrix.
This implies that we can propagate outwards~$B^{C_\rI(\nu)}{\odot}I$, just as we have done with permutation matrices.
Applying this procedure to all nodes in the tilings~$\Theta(\I)$ and~$\Theta(\I^c)$, we arrive at the decomposition below:
{\small
\beas
&&\text{For $\nu$ in $\Theta(\I)$:}
\\
&&{\qquad}B^{\nu,\gamma} = \e^{(\gamma)}
\quad\forall\gamma\in[r]
\\[2mm]
&&\text{For $\nu$ in $\Theta(\I^c)$:}
\\
&&{\qquad}B^{\nu,\gamma} = (\e^{(\gamma)})^\top
\quad\forall\gamma\in[r]
\\[2mm]
&&\text{For $\nu$ in $int(T)\setminus$\{nodes in~$\Theta(\I)$ or~$\Theta(\I^c)$ or descendants of such\} (depth-first order):}
\\
&&{\qquad}B^{\nu,\gamma} = \left(\sum_{\alpha=1}^{r} a_\alpha^{\nu,\gamma,\rI}B^{C_\rI(\nu),\alpha}\right) \odot \left(\sum_{\alpha=1}^{r} a_\alpha^{\nu,\gamma,\rII}B^{C_\rII(\nu),\alpha}\right)
\quad\forall\gamma\in[r]
\\[2mm]
&&\mat{\A^y}_\I=A{\cdot}B^{[N],y}{\cdot}\Abar
~~\forall{y\in[r]}
\text{,~~for appropriate matrices~$A$ and~$\Abar$}
\eeas
}
Notice that for compactness in writing we made use of the fact that $\aaa^{\nu,\gamma,\rI}=\sum\nolimits_{\alpha=1}^{r} a_\alpha^{\nu,\gamma,\rII}\e^{(\alpha)}$, where~$\e^{(\alpha)}$, $\alpha\in[r]$, is the vector in~$\R^r$ holding~$1$ in entry~$\alpha$ and~$0$ in the rest.
Note also that in this decomposition, as opposed to the previous ones, the matrices~$A$ and~$\Abar$ are not global constants that depend only on~$T$.
Rather, they also depend on~$\mat{\phi^{\nu,\gamma}}_{\reduce{\I}{\nu}}$ for tiling nodes $\nu\in\Theta(\I)\cup\Theta(\I^c)$, and thus are ultimately determined through a hidden computation that is not specified above.
This hidden computation is outside our scope, as we are only interested in the size of the matrices~$\{B^{[N],y}\}_y$.
It is not difficult to see that this size is precisely~$r^{\abs{\Theta(\I)}}$-by-$r^{\abs{\Theta(\I^c)}}$, meaning that the ranks of~$\{B^{[N],y}\}_y$ are no more than~$r^{\min\{\abs{\Theta(\I)},\abs{\Theta(\I^c)}\}}$.
Since these ranks are greater than or equal to those of~$\{\mat{\A^y}_\I\}_y$, the sought after upper bound (eq.~\ref{eq:tree_decomp_ranks_ub}) indeed holds.

\bigskip

In the third and final stage of the proof, we establish the lower bound stated in the theorem, namely, that for all configurations of weights~$\{\aaa^{\nu,\gamma,\rI},\aaa^{\nu,\gamma,\rII}\}_{\nu,\gamma}$ but a set of Lebesgue measure zero:
\be
rank\mat{\A^y}_\I~\geq~r^{\abs{\{(\nu_1,\nu_2)\in\Theta(\I)\times\Theta(\I^c):~\text{$\nu_1$ and $\nu_2$ are siblings in~$T$ with depth\textgreater$1$}\}}}
\quad\forall{y}
\label{eq:tree_decomp_ranks_lb}
\ee
We reduce the problem in three successive steps:
\begin{itemize}
\item
A tree decomposition (eq.~\ref{eq:tree_decomp}) with a product operator~$g(\cdot)$ admits maximal matricization ranks almost always (see app.~\ref{app:max_ranks}).
Therefore, to prove that eq.~\ref{eq:tree_decomp_ranks_lb} holds for all weight settings but a set of Lebesgue measure zero, it suffices to find a particular weight setting for which the inequality holds.
\item
By assumption, the discretizers~$\{\vv^{(i)}\}_{i\in[M]}$ span~$\R^r$.
Without loss of generality, assume that~$\{\vv^{(i)}\}_{i\in[r]}$ are linearly independent, and consider the sub-tensors of~$\{\A^y\}_y$ formed by restricting their indexes to the range~$1{\ldots}r$ (instead of~$1{\ldots}M$).
The matricizations of these sub-tensors \wrt~$\I$ are sub-matrices of~$\{\mat{\A^y}_\I\}_y$, thus any lower bound on ranks of the former matricizations immediately translates to a lower bound on ranks of the latter.
Since the sub-tensors are precisely the grid tensors that would have been generated by the tree decomposition (eq.~\ref{eq:tree_decomp}) had we omitted the trailing discretizers~$\{\vv^{(i)}\}_{i\in[M]\setminus[r]}$, establishing eq.~\ref{eq:tree_decomp_ranks_lb} in the case~$M=r$ proves that it holds in general~($M{\geq}r$).
\item
Bearing in mind that we assume~$M=r$ (and linear independence of~$\{\vv^{(i)}\}_{i\in[r]}$), denote by~$V$ the~$r$-by-$r$ matrix holding~$\vv^{(i)}$ in its $i$'th~row, \ie~$V:=[\vv^{(1)}\cdots\vv^{(r)}]^\top$.
From the tree decomposition (eq.~\ref{eq:tree_decomp}) it is evident that the discretizers affect generated grid tensors only through products of the form~$V\aaa^{\nu,\gamma\rI}$ or~$V\aaa^{\nu,\gamma\rII}$, where~$\nu$ is a parent of a leaf node in~$T$.
Since~$V$ is invertible ($\{\vv^{(i)}\}_{i\in[r]}$~are linearly independent), its exact value has no effect on the class of representable grid tensors~--~any change it undergoes may be accounted for by the weights~$\aaa^{\nu,\gamma\rI}$ and~$\aaa^{\nu,\gamma\rII}$ that multiply it (these weights do not appear elsewhere in the decomposition).
Accordingly, for establishing a lower bound on achievable grid tensor matricization ranks, the value of~$V$ is irrelevant (so long as it is invertible), and we may assume, without loss of generality, that~$V$ is the identity matrix, \ie~that~$\vv^{(i)}=\e^{(i)}$ for all~$i\in[r]$.
\end{itemize}
Taking into account the above reductions, our objective is to show that there exists a setting of weights $\{\aaa^{\nu,\gamma,\rI},\aaa^{\nu,\gamma,\rII}\}_{\nu,\gamma}$, such that the following special case of the matricized tree decomposition (eq.~\ref{eq:mat_tree_decomp}) generates matricizations meeting the lower bound in eq.~\ref{eq:tree_decomp_ranks_lb}:
{\small
\beas
&&\text{For $j$ in $\I$:}
\\
&&\quad\mat{\phi^{\{j\},\gamma}}_{\reduce{\I}{\{j\}}} = \e^{(\gamma)}
\quad\forall\gamma\in[r]
\\[2mm]
&&\text{For $j$ in $\I^c$:}
\\
&&\quad\mat{\phi^{\{j\},\gamma}}_{\reduce{\I}{\{j\}}} = (\e^{(\gamma)})^\top
\quad\forall\gamma\in[r]
\\[2mm]
&&\text{For $\nu$ in $int(T)$ (depth-first order):}
\\
&&\quad\mat{\phi^{\nu,\gamma}}_{\reduce{\I}{\nu}} = Q^{(\nu)}\left(\left(\sum_{\alpha=1}^{r} a_\alpha^{\nu,\gamma,\rI}\mat{\phi^{C_\rI(\nu),\alpha}}_{\reduce{\I}{C_\rI(\nu)}}\right) \odot \left(\sum_{\alpha=1}^{r} a_\alpha^{\nu,\gamma,\rII}\mat{\phi^{C_\rII(\nu),\alpha}}_{\reduce{\I}{C_\rII(\nu)}}\right)\right)\Qbar^{(\nu)}
~~\forall\gamma\in[r]
\\[2mm]
&&\mat{\A^y}_\I=\mat{\phi^{[N],y}}_{\reduce{\I}{[N]}}
\quad\forall{y\in[r]}
\eeas
}
Similarly to the procedure carried out in the second stage of the proof (establishing the upper bound in eq.~\ref{eq:tree_decomp_ranks_ub}), we now propagate outwards the permutation matrices~$Q^{(\nu)}$ and~$\Qbar^{(\nu)}$ corresponding to all interior nodes~$\nu{\in}int(T)$.
This brings forth the following decomposition:
{\small
\bea
&&\text{For $j$ in $\I$:}
\nonumber\\
&&{\qquad}B^{\{j\},\gamma} = \e^{(\gamma)}
\quad\forall\gamma\in[r]
\nonumber\\[2mm]
&&\text{For $j$ in $\I^c$:}
\nonumber\\
&&{\qquad}B^{\{j\},\gamma} = (\e^{(\gamma)})^\top
\quad\forall\gamma\in[r]
\nonumber\\[2mm]
&&\text{For $\nu$ in $int(T)$ (depth-first order):}
\nonumber\\
&&{\qquad}B^{\nu,\gamma} = \left(\sum_{\alpha=1}^{r} a_\alpha^{\nu,\gamma,\rI}B^{C_\rI(\nu),\alpha}\right) \odot \left(\sum_{\alpha=1}^{r} a_\alpha^{\nu,\gamma,\rII}B^{C_\rII(\nu),\alpha}\right)
\quad\forall\gamma\in[r]
\nonumber\\[2mm]
&&\mat{\A^y}_\I=A{\cdot}B^{[N],y}{\cdot}\Abar
~~\forall{y\in[r]}
\text{,~~for appropriate matrices~$A$ and~$\Abar$}
\label{eq:tree_decomp_ranks_lb_reduce_decomp}
\eea
}
The matrices~$A$ and~$\Abar$ in the assignments of~$\{\mat{\A^y}_\I\}_y$ essentially collect all permutation matrices~$\{Q^{(\nu)}\}_\nu$ and~$\{\Qbar^{(\nu)}\}_\nu$ (respectively) that have been propagated outwards.
Specifically,~$A$ (respectively~$\Abar$) is a product of factors, each of the form~$I{\odot}Q^{(\nu)}{\odot}I'$ (respectively~$I{\odot}\Qbar^{(\nu)}I'$) for a different interior node~$\nu$ and appropriately sized identity matrices~$I$ and~$I'$.
Since permutation matrices are invertible, and since the Kronecker product between two invertible matrices is invertible as well (see~\cite{bellman1970introduction} for proof), we conclude that the matrices~$A$ and~$\Abar$ are invertible.
Therefore, for every~$y\in[r]$, the rank of~$\mat{\A^y}_\I$ is equal to that of~$B^{[N],y}$.
It thus suffices to find a setting of weights $\{\aaa^{\nu,\gamma,\rI},\aaa^{\nu,\gamma,\rII}\}_{\nu,\gamma}$ for which:
\be
rank(B^{[N],\gamma})~\geq~r^{\abs{\{(\nu_1,\nu_2)\in\Theta(\I)\times\Theta(\I^c):~\text{$\nu_1$ and $\nu_2$ are siblings in~$T$ with depth\textgreater$1$}\}}}
\quad\forall{\gamma\in[r]}
\label{eq:tree_decomp_ranks_lb_reduce}
\ee
Disregard the trivial case where there exist siblings~$\nu_1\in\Theta(\I)$ and~$\nu_2\in\Theta(\I^c)$ of depth~$1$,\footnote{
In this case~$\I$ and~$\I^c$ are the children of the root node~$[N]$, and the maximal rank of~$B^{[N],\gamma}$ is~$1$ for every~$\gamma\in[r]$.}
and consider the following weight setting:
\begin{itemize}
\item 
$\nu$~is a node in~$\Theta(\I)$ or~$\Theta(\I^c)$, or a descendant of such:
$$\aaa^{\nu,\gamma,\rI} = \aaa^{\nu,\gamma,\rII} = \e^{(\gamma)}\quad\forall\gamma\in[r]$$
\item 
$\nu$~has one child in~$\Theta(\I)$ and the other in~$\Theta(\I^c)$:
$$\aaa^{\nu,\gamma,\rI} = \aaa^{\nu,\gamma,\rII} = \e^{(\gamma)}\quad\forall\gamma\in[r]$$
\item
$\nu$~is the root node~$[N]$:
$$\aaa^{\nu,\gamma,\rI} = \aaa^{\nu,\gamma,\rII} = \e^{(1)}\quad\forall\gamma\in[r]$$
\item 
$\nu$~meets neither of the above ($\0$ and $\1$ here denote the all-zero and all-one vectors in~$\R^r$, respectively):
\beas
&\aaa^{\nu,1,\rI} =
\left\{
\begin{array}{ll}
\1                  & ,\text{$C_{\rI}(\nu)$~has one child in~$\Theta(\I)$ and the other in~$\Theta(\I^c)$} \\
\e^{(1)}         & ,\text{otherwise}
\end{array}
\right.&
\\[1mm]
&\aaa^{\nu,1,\rII} =
\left\{
\begin{array}{ll}
\1                  & ,\text{$C_{\rII}(\nu)$~has one child in~$\Theta(\I)$ and the other in~$\Theta(\I^c)$} \\
\e^{(1)}         & ,\text{otherwise}
\end{array}
\right.&
\\[1mm]
&\aaa^{\nu,\gamma,\rI} = \aaa^{\nu,\gamma,\rII} = \0\quad\forall\gamma\in[r]\setminus\{1\}&
\eeas
\end{itemize}
Plugging this into the decomposition in eq.~\ref{eq:tree_decomp_ranks_lb_reduce_decomp}, one readily sees that:
\begin{itemize}
\item For every~$\nu\in\Theta(\I)$, $\{B^{\nu,\gamma}\}_{\gamma\in[r]}$~are indicator column vectors (one entry holds~$1$, the rest hold~$0$) such that~$B^{\nu,\gamma}{\neq}B^{\nu,\gamma'}$ if~$\gamma\neq\gamma'$.
The same holds for~$\nu\in\Theta(\I^c)$, but with the vectors being rows.
\item If~$\nu$ has one child in~$\Theta(\I)$ and the other in~$\Theta(\I^c)$, $\{B^{\nu,\gamma}\}_{\gamma\in[r]}$~are indicator matrices, where both the row and column indexes of the active entry do not repeat as~$\gamma$ varies.
\item The matrices~$\{B^{[N],\gamma}\}_{\gamma\in[r]}$ corresponding to the root node~$[N]$ are equal to one another, given by a joint Kronecker product between all of the following:
\begin{itemize}
\item $B^{\nu,1}$~for every node~$\nu$ in either~$\Theta(\I)$ or~$\Theta(\I^c)$ which does not have a sibling in the other
\item $\sum\nolimits_{\alpha=1}^{r}B^{\nu,\alpha}$ for every node~$\nu$ that has one child in~$\Theta(\I)$ and the other in~$\Theta(\I^c)$
\end{itemize}
\end{itemize}
According to the first observation above, $B^{\nu,1}$~has rank~$1$ for every~$\nu$ in~$\Theta(\I)$ or~$\Theta(\I^c)$.
The second observation implies that~$\sum_{\alpha=1}^{r}B^{\nu,\alpha}$ has rank~$r$ for every node~$\nu$ that has one child in~$\Theta(\I)$ and the other in~$\Theta(\I^c)$.
In turn, and while taking into account the rank-multiplicative property of the Kronecker product ($rank(A{\odot}A')=rank(A){\cdot}rank(A')$~--~see~\cite{bellman1970introduction} for proof), the third observation implies:
$$
rank(B^{[N],\gamma})=r^{\abs{\{(\nu_1,\nu_2)\in\Theta(\I)\times\Theta(\I^c):~\text{$\nu_1$ and $\nu_2$ are siblings in~$T$}\}}}
\quad\forall{\gamma\in[r]}
$$
We thus have found weights~$\{\aaa^{\nu,\gamma,\rI},\aaa^{\nu,\gamma,\rII}\}_{\nu,\gamma}$ for which eq.~\ref{eq:tree_decomp_ranks_lb_reduce} holds.\footnote{
This applies to all but the trivial case where~$\I$ is such that there exist siblings~$\nu_1\in\Theta(\I)$ and~$\nu_2\in\Theta(\I^c)$ of depth~$1$ ($\I$~and~$\I^c$ are the children of the root node~$[N]$).
In the latter case the lower bound in eq.~\ref{eq:tree_decomp_ranks_lb_reduce} can be met trivially.
}
This establishes the sought after lower bound on matricization ranks (eq.~\ref{eq:tree_decomp_ranks_lb}), completing the proof of the theorem.

\qed

% MAXIMALITY OF  MATRICIZATION RANKS
\section{Maximality of Matricization Ranks} \label{app:max_ranks}

In the proof of theorem~\ref{theorem:tree_decomp_ranks} (app.~\ref{app:proofs:tree_decomp_ranks}), and in the derivation of corollary~\ref{corollary:mix_by_tree} (sec.~\ref{sec:analysis}), we made use of the fact that a tree or mixed decomposition (eq.~\ref{eq:tree_decomp} or~\ref{eq:mix_decomp} respectively), with a product operator~$g(\cdot)$, admits maximal matricization ranks almost always.
That is to say, for any index set~$\I\subset[N]$, the ranks of generated grid tensors~$\{\A^y\}_y$ when matricized \wrt~$\I$, attain their maximum possible values (which depend on both the decomposition and~$\I$) for all configurations of weights ($\{\aaa^{\nu,\gamma,\rI},\aaa^{\nu,\gamma,\rII}\}_{\nu,\gamma}$ for the tree decomposition, $\{\aaa^{\nu,\gamma,\rI},\aaa^{\nu,\gamma,\rII}\}_{\nu,\gamma}$ and~$\{\aaabar^{\nubar,\gamma,\rI},\aaabar^{\nubar,\gamma,\rII}\}_{\nubar,\gamma}$ for the mixed decomposition) but a set of Lebesgue measure zero.
Hereinafter we justify this assertion.

When equipped with the product operator ($g(a,b)=a{\cdot}b$), a tree or mixed decomposition generates grid tensors~$\{\A^y\}_y$ whose entries are polynomials in the decomposition weights.
Therefore, for any index set~$\I\subset[N]$, the entries of the matricizations~$\{\mat{\A^y}_\I\}_y$ are, too, polynomials in the decomposition weights.
Claim~\ref{claim:max_rank} below implies that for a particular index~$y$, the rank of~$\mat{\A^y}_\I$~is maximal almost always, \ie~for all weight settings but a set of measure zero.
Since the union of finitely many zero measure sets is itself a zero measure set (see~\cite{jones2001lebesgue} for example), we conclude that the ranks of~$\{\mat{\A^y}_\I\}_y$ are jointly maximal almost always, which is what we set out to prove.

\begin{claim}
\label{claim:max_rank}
Let~$D,M_1,M_2\in\N$, and consider a polynomial function mapping weights~$\alphabf\in\R^D$ to matrices~$A(\alphabf)\in\R^{M_1{\times}M_2}$ (``polynomial'' here means that all entries of~$A(\alphabf)$ are polynomials in~$\alphabf$).
Denote~$R=\max_{\alphabf\in\R^D}rank(A(\alphabf))$, and consider the set $S:=\{\alphabf\in\R^D:rank(A(\alphabf))<R\}$.
This set has Lebesgue measure zero.
\end{claim}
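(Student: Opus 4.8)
The plan is to reduce the statement to the classical fact that the zero set of a nonzero multivariate polynomial has Lebesgue measure zero. First I would recall the minor characterization of rank: for any matrix, $rank(A(\alphabf))\geq R$ if and only if at least one of its $R\times R$ minors is nonzero, and hence $rank(A(\alphabf))<R$ if and only if \emph{all} $R\times R$ minors vanish. Each such minor is the determinant of an $R\times R$ submatrix of $A(\alphabf)$; being a polynomial expression in the (polynomial) entries of $A(\alphabf)$, every minor is itself a polynomial in $\alphabf\in\R^D$.

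Next, since $R=\max_{\alphabf}rank(A(\alphabf))$, there exists a point $\alphabf_0$ at which the rank equals $R$, so some fixed $R\times R$ minor $p(\alphabf)$ satisfies $p(\alphabf_0)\neq0$. In particular $p$ is not the identically-zero polynomial. Because $S$ is precisely the set on which all $R\times R$ minors vanish, we have the containment $S\subseteq\{\alphabf\in\R^D:p(\alphabf)=0\}$. It therefore suffices to show that the zero set of the single nonzero polynomial $p$ has Lebesgue measure zero.

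This last fact I would establish by induction on the number of variables $D$, using Fubini's theorem. The base case $D=1$ is immediate, since a nonzero univariate polynomial has only finitely many roots, hence a measure-zero zero set. For the inductive step, write $p(\alphabf)=\sum_k q_k(\alpha_1,\ldots,\alpha_{D-1})\,\alpha_D^{\,k}$ as a polynomial in $\alpha_D$ whose coefficients $q_k$ are polynomials in the remaining $D-1$ variables. Since $p$ is not identically zero, some coefficient $q_k$ is not identically zero; by the inductive hypothesis the set $E\subset\R^{D-1}$ on which that $q_k$ vanishes has $(D{-}1)$-dimensional measure zero. For every $(\alpha_1,\ldots,\alpha_{D-1})\notin E$, the univariate polynomial $\alpha_D\mapsto p(\alphabf)$ is nonzero, so the corresponding fiber of $\{p=0\}$ is finite and thus one-dimensionally null. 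Applying Fubini's theorem, the $D$-dimensional measure of $\{p=0\}$ equals the integral over $\R^{D-1}$ of the measure of each fiber, and this integrand vanishes off the null set $E$; hence $\{p=0\}$ has measure zero, as claimed.

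The argument is essentially routine: the minor characterization of rank, the polynomiality of the minors, and the containment $S\subseteq\{p=0\}$ are all immediate, and the only genuine work is the induction showing a nonzero polynomial has a null zero set. The single point requiring care there is the Fubini bookkeeping, namely verifying that the exceptional base set $E\subset\R^{D-1}$ is null so that the fiberwise measure is zero almost everywhere; this is exactly where the inductive hypothesis is consumed, and I expect it to be the main (though standard) obstacle.
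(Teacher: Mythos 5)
Your proof is correct and follows essentially the same route as the paper's: pick an $R\times R$ minor that is nonzero at a rank-maximizing point $\alphabf_0$, observe that it is a not-identically-zero polynomial in $\alphabf$ whose zero set contains $S$, and conclude from the fact that the zero set of a nonzero polynomial has Lebesgue measure zero. The only difference is that the paper cites this last fact from the literature, whereas you prove it yourself by induction on $D$ with Fubini's theorem~--~a correct and standard argument that merely makes the write-up self-contained.
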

\begin{proof}
We disregard the trivial case where~$R=0$.
Let~$\alphabf_0$ be a point at which~$R$ is attained ($rank(A(\alphabf_0))=R$), and assume without loss of generality that the top-left $R{\times}R$~minor of~$A(\alphabf_0)$, \ie~the determinant of~$A(\alphabf_0)_{1:R,1:R}$, is non-zero.
The function~$p:\R^D\to\R$ defined by~$p(\alphabf)=\det(A(\alphabf)_{1:R,1:R})$ is a polynomial, which by construction does not vanish everywhere ($p(\alphabf_0)\neq0$).
The zero set of a polynomial is either the entire space, or a set of Lebesgue measure zero (see~\cite{caron2005zero} for proof).
Therefore, the zero set of~$p(\cdot)$ has Lebesgue measure zero.
Now, for every~$\alphabf{\in}S$:
$$rank(A(\alphabf))<R
\implies
rank(A(\alphabf)_{1:R,1:R})<R
\implies
p(\alphabf):=\det(A(\alphabf)_{1:R,1:R})=0
$$
$S$~is thus contained in the zero set of~$p(\cdot)$, and therefore too, has Lebesgue measure zero.
\end{proof}

\end{document}